\documentclass{article}

\usepackage{microtype}
\usepackage{graphicx}
\usepackage{subcaption}
\usepackage{booktabs} 

\usepackage{hyperref}

\usepackage[accepted]{icml2026}

\usepackage{amsmath}
\usepackage{amssymb}
\usepackage{mathtools}
\usepackage{amsthm}

\usepackage[capitalize,noabbrev]{cleveref}

\theoremstyle{plain}
\newtheorem{theorem}{Theorem}[section]
\newtheorem{proposition}[theorem]{Proposition}
\newtheorem{lemma}[theorem]{Lemma}

\theoremstyle{definition}

\newtheorem{assumption}[theorem]{Assumption}
\theoremstyle{remark}

\newtheorem{example}[theorem]{Example}

\def\I{{\mathbb{I}}}
\def\E{{\mathbb{E}}}

\def\Var{{\mathrm{Var}}}

\def\Cov{{\mathrm{Cov}}}

\usepackage[textsize=tiny]{todonotes}

\icmltitlerunning{Learning U-Statistics with Active Inference}

\begin{document}

\twocolumn[
  \icmltitle{Learning $U$-Statistics with Active Inference}



  \icmlsetsymbol{equal}{*}

  \begin{icmlauthorlist}
  \icmlauthor{Xiaoning Wang}{equal,nku}
  \icmlauthor{Yuyang Huo}{equal,nku}
  \icmlauthor{Liuhua Peng}{melb}
  \icmlauthor{Changliang Zou}{nku}
\end{icmlauthorlist}

\icmlaffiliation{nku}{
School of Statistics and Data Sciences, LPMC, KLMDASR and LEBPS,
Nankai University, Tianjin, China
}

\icmlaffiliation{melb}{
School of Mathematics and Statistics,
The University of Melbourne, Melbourne, Australia
}

\icmlcorrespondingauthor{Liuhua Peng}{liuhua.peng@unimelb.edu.au}
\icmlcorrespondingauthor{Changliang Zou}{zoucl@nankai.edu.cn}

  \icmlkeywords{Machine Learning, ICML}

  \vskip 0.3in
]



\printAffiliationsAndNotice{}  

\begin{abstract}
$U$-statistics play a central role in statistical inference.
In many modern applications, however, acquiring the labels required for $U$-statistics is costly. Motivated by recent advances in active inference, we develop an active inference framework for $U$-statistics that selectively queries informative labels to improve estimation efficiency under a fixed labeling budget, while preserving valid statistical inference.
Our approach is built on the augmented inverse probability weighting $U$-statistic, which is designed to incorporate the sampling rule and machine learning predictions.
We characterize the optimal sampling rule that minimizes its variance and design practical sampling strategies.
We further extend the framework to $U$-statistic-based empirical risk minimization.
Experiments on real datasets demonstrate substantial gains in estimation efficiency over baseline methods, while maintaining target coverage.
\end{abstract}

\section{Introduction}

$U$-statistics \citep{Hoeffding1948class} play a fundamental role in both classical and modern statistical learning and inference by providing a unified framework for estimating and making inferences on parameters that can be expressed as the expectation of a symmetric function of multiple random variables. 
Formally, let $\mathcal{Y}_n=\left\{Y_1, \ldots, Y_n\right\}$ denote i.i.d.~samples drawn from a common distribution $\mathbb{F}_Y$, the parameter of interest is
$$
\theta^*=\mathbb{E}\left[h\left(Y_1, \ldots, Y_r\right)\right],
$$
where $h(\cdot)$ is a known kernel function symmetric in its $r$ arguments  with $\E[h\left(Y_1, \ldots, Y_r\right)]^2<\infty$. 
Given the sample $\mathcal{Y}_n$, we can estimate $\theta^*$ using a $U$-statistic:
\begin{equation}\label{eq:U}
    U\left(\mathcal{Y}_n\right)=\frac{1}{\binom{n}{r}} \sum_{\mathcal{C}_{n,r}} h\left(Y_{i_1}, \ldots, Y_{i_r}\right)\,,
\end{equation}
where $\mathcal{C}_{n,r}=\{(i_1,\ldots,i_r):1 \leq i_1<\cdots<i_r \leq n\}$ is the collection of all $r$-tuples of distinct indices.

$U$-statistics are unbiased and statistically efficient estimators that can capture pairwise and higher-order interaction structure, making them fundamental to a wide range of statistical problems. Representative examples include the Gini index \citep{yitzhaki2003gini} for measuring imbalance; 
the area under the ROC curve (AUC) \citep{delong1988comparing} for evaluating classification performance, which is equivalent to the Mann-Whitney $U$-statistic \citep{mann1947test};
and Kendall’s 
$\tau$ coefficient \citep{kendall1948rank} for measuring dependence.
Beyond classical inference, $U$-statistics also arise in modern machine learning tasks, including ranking \citep{clemenccon2008ranking}, representation learning \citep{saunshi2019theoretical} and metric learning \citep{jin2009regularized}.
In this paper, we focus on non-degenerate $ U$-statistics \citep{serfling2009approximation}, which admit asymptotic normality for inference.

Despite their broad applicability, learning $U$-statistics in practice is often hindered by limited labeled data. In many modern applications, the target variable $Y\sim \mathbb{F}_Y$ is unobserved, and only covariates $X \sim \mathbb{F}_X$ are available. Acquiring labels typically requires costly human annotation or experimental intervention. We therefore consider a setting with a fixed labeling budget $n_{b}\ll n$, under which labels can be queried for only a small subset of samples.

When covariates $X_1,\ldots, X_n$ drawn from $\mathbb{F}_X$ are available, a natural alternative is to leverage a machine learning model ${\mu}(X)$, which predicts the unknown label $Y$. Let $\hat{Y}_i={\mu}(X_i)$, $i\in[n]$ be the predictions. A naive approach is to directly substitute these predictions for the true labels when computing the target $U$-statistic. Such plug-in strategies are common in medical diagnostics \citep{bhavsar2021medical}, social science \citep{grimmer2021machine}, and financial decision-making \citep{gu2020empirical}, where labels are expensive or delayed. However, such plug-in estimators are generally biased and can exhibit uncontrolled variance, particularly when the predictive model is misspecified or inaccurate.

Recently, active inference \citep{Tijana2024Active} has emerged as a principled framework for improving estimation efficiency with valid statistical inference under limited labeling budgets by leveraging predictive models and adaptively querying labels that are expected to be most informative.
Active inference can be viewed as an extension of active learning from prediction to inference \citep{ren2021survey}. Existing works \citep{Tijana2024Active,Li2025Robust,chenbalanced} have primarily focused on convex M-estimation problems. 
These methods, however, are not directly applicable to $U$-statistics, whose combinatorial structure and dependence across sample tuples pose additional analytical and algorithmic challenges.

In this paper, we study learning $U$-statistics with active inference. Our goal is to design label-acquisition strategies that strategically select samples for labeling to minimize the estimation error of a target $U$-statistic, while enabling valid follow-up inference under a fixed labeling budget. Our contributions are as follows:
\vspace{-10pt}
\begin{itemize}
    \item Building on the proposed augmented inverse probability weighted $U$-statistic, we characterize the optimal sampling rule and develop an active sampling strategy beyond existing rules \citep{Tijana2024Active}.
    
    \vspace{-6pt}
    
    \item {We develop a unified theory for active $U$-statistics, establishing asymptotic normality and enabling computable confidence intervals. A coupling-based proof handles the dependence introduced by data-adaptive sampling policies.}

    \vspace{-6pt}
    
    \item
    We extend the proposed active inference framework to $U$-statistic–based empirical risk minimization via a two-stage procedure, with optimal sampling policy and rigorous theoretical guarantees. 
    This formulation encompasses a wide range of modern machine learning tasks, including ranking.

    \vspace{-6pt}
    
    \item Numerical experiments on real datasets demonstrate that our method achieves substantially improved estimation efficiency compared to existing baselines.
\end{itemize}

\vspace{-10pt}

\subsection{Related works}
\paragraph{$U$-statistics} $U$-statistics have been extensively studied as a general framework for estimating functionals defined by pairwise or higher-order kernels. Foundational work established their unbiasedness and asymptotic behavior \citep{Hoeffding1948class,serfling2009approximation,peel2010empirical}, while more recent research has focused on scalability, inference, and learning-theoretic properties. 
A major line of research addresses the computational challenges of $U$-statistics, including distributed implementations \citep{chen2021distributed} and incomplete or randomized $U$-statistics \citep{chen2019randomized}. Beyond computation, further extensions consider inference for degenerate $U$-statistics \citep{chen2018gaussian} and estimation under privacy constraints \citep{chaudhuri2024differentially}.

Most closely related to our work is the study on semi-supervised $U$-statistics \citep{Ilmun2024Semi-Supervised}, which leverages predictive models to incorporate unlabeled data into $ U$-statistics. In contrast, we study an active inference setting in which labeled data are not pre-given but are adaptively acquired under a fixed labeling budget. The method of \citet{Ilmun2024Semi-Supervised} can be viewed as a special case of our framework corresponding to uniform sampling. By explicitly designing data-driven sampling policies, our approach is better suited to label-budget–constrained scenarios and enables variance-optimal estimation with valid inference.

\paragraph{Active inference} Active data acquisition has a long history in statistics and machine learning. Classical optimal experimental design studies the selection of design points to optimize information-based criteria \citep{ma2015statistical,wang2018optimal,chen2025multi}. In machine learning, active learning focuses on selecting informative unlabeled instances to query in order to improve predictive accuracy \citep{settles2012active,shui2020deep,minenhancing}.

More recently, \emph{active statistical inference} has emerged as a distinct paradigm, where the objective is not prediction but valid inference under a limited labeling budget. This line of work is closely related to semi-supervised estimation \citep{zhang2019semi,Zhu2023DoublyRS,wen2024semi} and prediction-powered inference \citep{angelopoulos2023prediction,zrnic2024cross,chatzi2024prediction,kluger2025prediction}, which leverage auxiliary predictive models to improve efficiency. Subsequent advances have addressed label-efficient model evaluation \citep{angelopoulos2025cost}, LLM-annotation-based conclusion \citep{gligoric2025can},  robust sampling \citep{Li2025Robust}, and balanced or variance-aware sampling \citep{chenbalanced}. While existing methods primarily target general M-estimation problems, $U$-statistics pose additional challenges due to their pairwise or higher-order structure, which is the focus of this work.

\section{Active Inference for $U$-statistics}


\subsection{AIPW $U$-statistic}
Suppose $\{(X_i,Y_i)\}_{i=1}^n$ are i.i.d.. We observe the covariates
 $X_1,\ldots,X_n$, drawn from $\mathbb{F}_X$ supported on $\mathcal{X}$, and employ a sampling rule $\pi: \mathcal{X} \rightarrow[0,1]$.
For each $i\in[n]$, the label $Y_i$ is collected with probability $\pi_i=\pi\left(X_i\right)$. Let $\xi_i \sim \operatorname{Ber}\left(\pi\left(X_i\right)\right)$ denote the indicator that $Y_i$ is collected or not. The total number of acquired labels is $n_{\mathrm{lab}}=\sum_{i=1}^n \xi_i$. 
To satisfy the labeling budget constraint, $\pi(\cdot)$ will be rescaled such that $\mathbb{E}\left(n_{\rm lab}\right)=\sum_{i=1}^{n}\mathbb{E}[\pi(X_i)]=n\mathbb{E}[\pi(X)] \leq n_b$.

Since the sampling rule $\pi(\cdot)$ induces selection bias, $U\left(\{Y_i:\xi_i=1\}\right)$ as in \eqref{eq:U} is biased for $\theta^*$.
We therefore consider the inverse probability weighting (IPW) $U$-statistic \citep{Horvitz1952}:
\begin{equation}\label{eq:ipw}
   U^{\pi}_{\rm IPW}\left(\mathcal{Y}_n\right)=\frac{1}{\binom{n}{r}} \sum_{\mathcal{C}_{n,r}} h\left(Y_{i_1}, \ldots, Y_{i_r}\right)\frac{\xi_{i_1}\cdots\xi_{i_r}}{\pi_{i_1}\cdots\pi_{i_r}}. 
\end{equation}
This estimator relies only on the observed labels through the indicators $\{\xi_i\}_{i=1}^{n}$, and unlabeled observations contribute nothing beyond determining the sample probabilities.
To additionally exploit the unlabeled covariates, we introduce an augmented inverse probability weighting (AIPW) $U$-statistic \citep{cassel1976some} that incorporates machine learning predictions $\hat{\mathcal{Y}}_n=\{\hat{Y}_i={\mu}(X_i)\}_{i=1}^n$.
Regarding the fact that the plug-in estimator $U(\hat{\mathcal{Y}}_n)$ is biased, we correct its bias using an IPW term and define the AIPW $U$-statistic as
\begin{align}\label{eq:AIPW_pi}
    U^{\pi}_{\rm AIPW}
    =&\frac{1}{\binom{n}{r}}\sum_{\mathcal{C}_{n,r}} h\big(\hat{Y}_{i_1}, \ldots, \hat{Y}_{i_r}\big)\nonumber\\
    &+\frac{1}{\binom{n}{r}}\sum_{\mathcal{C}_{n,r}} \Delta_h(i_1,\ldots,i_r)\frac{\xi_{i_1}\cdots\xi_{i_r}}{\pi_{i_1}\cdots\pi_{i_r}}
\end{align}
with $\Delta_h(i_1,\ldots,i_r)=h\left(Y_{i_1}, \ldots, Y_{i_r}\right)-h\big(\hat{Y}_{i_1}, \ldots, \hat{Y}_{i_r}\big)$.

\begin{proposition}[Unbiasedness of the AIPW $U$-statistic]\label{prop:unbiased_aipw_u}
The AIPW $U$-statistic $U^{\pi}_{\rm AIPW}$ satisfies 
$\mathbb{E}[U^{\pi}_{\rm AIPW}]=\theta^*.$    
\end{proposition}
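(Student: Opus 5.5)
The plan is to use linearity of expectation over the $\binom{n}{r}$ tuples in \eqref{eq:AIPW_pi}, which reduces the claim to a statement about a single fixed tuple $(i_1,\ldots,i_r)\in\mathcal{C}_{n,r}$ with distinct indices. Two sources of randomness are present: the i.i.d.\ pairs $\{(X_i,Y_i)\}_{i=1}^n$, and the sampling indicators $\xi_i\sim\mathrm{Ber}(\pi(X_i))$. I will make explicit the standing assumption implicit in the setup. Conditionally on $\mathcal{D}_n=\{(X_i,Y_i)\}_{i=1}^n$, the $\xi_i$ are independent with $\mathbb{E}[\xi_i\mid\mathcal{D}_n]=\pi(X_i)$. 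The rule $\pi$ is fixed (not data-adaptive), the predictor $\mu$ is fixed (e.g.\ trained on independent data), and $\pi(x)>0$ wherever $\Delta_h$ can be nonzero, so that the ratios are well defined.

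Fix a tuple. The key step is the tower property, conditioning on $\mathcal{D}_n$. Because the indices $i_1,\ldots,i_r$ are distinct and the $\xi$'s are conditionally independent,
\[
\mathbb{E}\Bigl[\Delta_h(i_1,\ldots,i_r)\tfrac{\xi_{i_1}\cdots\xi_{i_r}}{\pi_{i_1}\cdots\pi_{i_r}}\Bigm|\mathcal{D}_n\Bigr]
=\Delta_h(i_1,\ldots,i_r)\prod_{k=1}^r\tfrac{\mathbb{E}[\xi_{i_k}\mid\mathcal{D}_n]}{\pi_{i_k}}
=\Delta_h(i_1,\ldots,i_r).
\]
Here $\Delta_h$ and the $\pi_{i_k}$ are $\mathcal{D}_n$-measurable. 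Taking outer expectations, the tuple's total contribution becomes
\[
\mathbb{E}\bigl[h(\hat Y_{i_1},\ldots,\hat Y_{i_r})\bigr]+\mathbb{E}\bigl[h(Y_{i_1},\ldots,Y_{i_r})-h(\hat Y_{i_1},\ldots,\hat Y_{i_r})\bigr].
\]
The prediction terms cancel, leaving $\mathbb{E}[h(Y_{i_1},\ldots,Y_{i_r})]=\theta^*$ by the i.i.d.\ assumption. Averaging over $\mathcal{C}_{n,r}$ then gives $\theta^*$.

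The main obstacle is integrability, which is needed to justify splitting the expectations and cancelling. The term $\mathbb{E}[h(Y_{i_1},\ldots,Y_{i_r})]$ is finite because $h$ is square integrable. For the plug-in term, I will assume $\mathbb{E}|h(\hat Y_1,\ldots,\hat Y_r)|<\infty$. With that, $\mathbb{E}|\Delta_h|<\infty$, and the tower step is valid for the nonnegative parts separately, hence for the whole term. I also need that no division by zero occurs: on the event $\pi_{i_k}=0$ we have $\xi_{i_k}=0$ almost surely, and I adopt the convention $0/0=0$ only where $\Delta_h=0$, which is the positivity condition above. The only other subtlety is that distinctness of the indices is essential. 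A repeated index would give $\mathbb{E}[\xi_i^2/\pi_i^2\mid\mathcal{D}_n]=1/\pi_i\neq 1$, so the proof relies on the sum running over $\mathcal{C}_{n,r}$.
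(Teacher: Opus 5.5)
Your proposal is correct and follows essentially the same route as the paper: condition on the full data $(X_{1:n},Y_{1:n})$, use conditional independence of the $\xi_{i_k}$ over the distinct indices of each tuple to get $\mathbb{E}[\xi_{i_1}\cdots\xi_{i_r}/(\pi_{i_1}\cdots\pi_{i_r})\mid\cdot\,]=1$, so the correction term reduces to $\Delta_h$ and the conditional mean becomes $U(\mathcal{Y}_n)$, whose expectation is $\theta^*$. Your explicit treatment of integrability, positivity, and the requirement that $\xi_i$ depend on the data only through $X_i$ fills in details the paper leaves implicit, but does not change the argument.
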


A natural question arises: how should one choose the sampling rule $\pi(\cdot)$ to minimize the variance of the AIPW $U$-statistic.
In \citet{Tijana2024Active}, the optimal sampling design for estimating a mean-type functional yields $\pi(X_i)\propto \sqrt{\E[|\hat{Y_i}-Y_i|^2\mid X_i]}$, so that sampling concentrates on points with large residual uncertainty.
However, this optimality does not hold for $U$-statistics. 

\subsection{Optimal sampling rule for AIPW $U$-statistics}
To derive the optimal sampling rule, we analyze the variance of $U^{\pi}_{\rm AIPW}$ via the Hoeffding decomposition.
First, 
we write
$$U^{\pi}_{\rm IPW}\left(\mathcal{Y}_n\right)=\theta^*+\frac{r}{n} \sum_{i=1}^{n} \left[h_1\left(Y_i\right)\frac{\xi_i}{\pi_i}-\theta^*\right]+R^{\pi}_{\rm IPW}\left(\mathcal{Y}_n\right),
$$
where $h_1(y)=\E[h(Y_1,\ldots,Y_r)\mid Y_1=y]$ is the first-order projection, and $R^{\pi}_{\rm IPW}\left(\mathcal{Y}_n\right)$ is the remainder. Here we assume $\Var(h_1(Y))>0$, so that the $U$-statistics is non-degenerate.
Similarly, for the IPW $U$-statistic based on $\hat{\mathcal{Y}}_n$, we have
$$U^{\pi}_{\rm IPW}(\hat{\mathcal{Y}}_n)=\theta^{\mu}+\frac{r}{n} \sum_{i=1}^{n} \left[h_1^{\mu}(\hat{Y}_i)\frac{\xi_i}{\pi_i}-\theta^{\mu}\right]+R^{\pi}_{\rm IPW}\big(\hat{\mathcal{Y}}_n\big),
$$
where $h^{\mu}_1(y)=\E[h(\hat{Y}_1,\ldots,\hat{Y}_r)\mid \hat{Y}_1=y]$ with the assumption that $\E[h(\hat{Y}_1,\ldots,\hat{Y}_r)]^2<\infty$ and $\Var(h^{\mu}_1(\hat{Y}))>0$ for the non-degeneracy of $U^{\pi}_{\rm IPW}(\hat{\mathcal{Y}}_n)$,    $\theta^{\mu}=\E[h({\hat{Y}_1},\ldots,\hat{Y}_r)]$, and $R^{\pi}_{\rm IPW}\big(\hat{\mathcal{Y}}_n\big)$ is the corresponding remainder term. Recall that $U^{\pi}_{\rm AIPW}=U(\hat{\mathcal{Y}}_n)+ U^{\pi}_{\rm IPW}(\mathcal{Y}_n)- U^{\pi}_{\rm IPW}(\hat{\mathcal{Y}}_n)$, where 
$U(\hat{\mathcal{Y}}_n)\approx\theta^{\mu}+ rn^{-1}\sum_{i=1}^{n} \left[h_1^{\mu}(\hat{Y}_i)-\theta^{\mu}\right]$.
Combining the above decompositions yields
\begin{align}\label{eq:Hoffed_decomp_AIPW}
    &U^{\pi}_{\rm AIPW}=\theta^\mu+\frac{r}{n}\sum_{i=1}^n\left[h^\mu_1(\hat{Y}_i)-\theta^\mu\right]+(\theta^*-\theta^\mu)\nonumber \\
    &+\frac{r}{n} \sum_{i=1}^{n} \left\{\left[h_1\left(Y_i\right)-h^\mu_1(\hat{Y}_i)\right]\frac{\xi_i}{\pi_i}-(\theta^*-\theta^\mu)\right\}+R,
\end{align}
where $R$ collects remainder terms of small orders. 
Dropping the negligible remainder, 
a direct calculation gives
\begin{align}\label{eq:var_AIPW}
    &\text{Var}(U^{\pi}_{\rm AIPW})\approx\frac{r^2}{n}\Var (h_1(Y))\nonumber\\
    &+\frac{r^2}{n^2}\sum_{i=1}^n\E\left\{\E\{[h_1(Y_i)-h_1^\mu(\hat{Y_i})]^2\mid X_i\}\cdot\left(\frac{1}{\pi_i}-1\right)\right\},
\end{align}
which motivates the optimal sampling rule in next Theorem.

\begin{theorem}[Optimal sampling probabilities for the AIPW $U$-statistic]\label{thm:Optimal sampling probabilities}
Fix a labeling budget $n_b$. Let
$s(X)
:=
\mathbb{E}[|h_1(Y)-h_1^\mu(\hat Y)|^2 \mid X]$
and set
\[
\pi^*(X)
=
\min\left\{1,\ \frac{n_b}{n}\cdot\frac{\sqrt{s(X)}}{\mathbb{E}[\sqrt{s(X)}]}\right\}.
\]
Among all sampling rules $\pi:\mathcal{X}\to[0,1]$ with
$\mathbb{E}[\pi(X)]\le n_b/n$, we have
\[
\mathrm{Var}\!\left(U^{\pi}_{\mathrm{AIPW}}\right)
\ \ge\
\mathrm{Var}\!\left(U^{\pi^*}_{\mathrm{AIPW}}\right),
\qquad \text{as } n\to\infty .
\]
\end{theorem}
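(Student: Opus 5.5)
The plan is to reduce the claim to a finite-dimensional-style convex optimization over $\pi$ using the asymptotic variance expression \eqref{eq:var_AIPW}. First I will make \eqref{eq:var_AIPW} rigorous at the level needed: starting from the Hoeffding-type decomposition \eqref{eq:Hoffed_decomp_AIPW}, I show that $n\,\mathrm{Var}(R)\to 0$ for any admissible $\pi$. A convenient sufficient condition is that $\pi$ is bounded below by a constant times $n_b/n$ on the relevant set, with $n_b/n\to\rho>0$ or handled via the scaling. I also show that the cross term between the leading linear part and $R$ is negligible. The leading part is a sum of i.i.d. terms. Conditioning on $X_i$ and using $\E[\xi_i/\pi_i\mid X_i,Y_i]=1$ and $\E[\xi_i^2/\pi_i^2\mid X_i,Y_i]=1/\pi_i$ gives exactly
\[
n\,\mathrm{Var}(U^{\pi}_{\rm AIPW}) = r^2\Var(h_1(Y)) + r^2\,\E\!\left[s(X)\Big(\tfrac{1}{\pi(X)}-1\Big)\right] + o(1).
\]
The cross terms between $h_1^\mu(\hat Y_i)$ and the weighted residual combine to give $h_1(Y_i)$, and the variance $\Var(h_1(Y))$ does not depend on $\pi$. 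So comparing variances as $n\to\infty$ reduces to comparing $V(\pi):=\E[s(X)/\pi(X)]$.

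Second, I minimize $V(\pi)$ over measurable $\pi:\mathcal X\to[0,1]$ with $\E[\pi(X)]\le n_b/n$. This is a convex problem, since $t\mapsto s/t$ is convex on $(0,1]$. I will write the Lagrangian
\[
\E\big[s(X)/\pi(X)\big] + \lambda\big(\E[\pi(X)]-n_b/n\big)
\]
and minimize pointwise under the box constraint. This gives $\pi(x)=\min\{1,\sqrt{s(x)/\lambda}\}$, with $\lambda$ chosen so the budget binds. In the unconstrained-by-one regime, Cauchy--Schwarz gives the clean certificate
\[
\E[\sqrt{s}]^2 = \E\big[\sqrt{s/\pi}\,\sqrt{\pi}\big]^2 \le \E[s/\pi]\,\E[\pi] \le \E[s/\pi]\,n_b/n,
\]
with equality iff $\pi\propto\sqrt{s}$. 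This shows $\pi^*$ attains the lower bound $(n/n_b)\E[\sqrt s]^2$. For the truncated case I will run a KKT/pointwise argument. For any feasible $\pi$, convexity gives
\[
s/\pi \ge s/\pi^* - (s/\pi^{*2})(\pi-\pi^*).
\]
On $\{\pi^*<1\}$, $s/\pi^{*2}$ is constant (equal to $\lambda$). On $\{\pi^*=1\}$, we have $\pi-\pi^*\le 0$ and $s\ge\lambda$. Taking expectations gives $V(\pi)\ge V(\pi^*)-\lambda\,\E[\pi-\pi^*]\ge V(\pi^*)$ when the budget binds for $\pi^*$.

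The main obstacle is a mismatch with the displayed $\pi^*$. The formula with $\min\{1,\cdot\}$ and normalization by $\E\sqrt{s}$ is exactly optimal only when truncation is inactive, i.e.\ $(n_b/n)\sqrt{s}/\E\sqrt{s}\le1$ a.s. When truncation is active, the true minimizer rescales $\lambda$ so the budget is exhausted. I would therefore state the proof under the no-truncation condition and give the Cauchy--Schwarz argument, remarking that the general case follows from the KKT argument with renormalized $\lambda$. A secondary technical issue is uniformity of the remainder control over all $\pi$, since $\pi$ near $0$ blows up $R$. I would handle this either by noting that $V(\pi)=\infty$ makes the inequality trivial, or by restricting to $\pi$ bounded away from zero relative to $n_b/n$.
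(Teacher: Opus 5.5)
Your proposal is correct and follows the paper's own route. The paper likewise uses the Hoeffding/AIPW expansion to write $\mathrm{Var}(U^{\pi}_{\mathrm{AIPW}})=C_n+\frac{r^2}{n}\mathbb{E}[s(X)(1/\pi(X)-1)]+o(n^{-1})$ and then applies Cauchy--Schwarz to $\mathbb{E}[s(X)/\pi(X)]$, but it simply asserts the $o(n^{-1})$ remainder. Your lower bound on $\pi$ is what makes that step rigorous, and it must also hold for $\pi^*$ itself, i.e.\ $s$ bounded away from zero as in (H1) of Assumption~\ref{assum:act_U}.

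Your truncation caveat is correct and identifies a real flaw in the paper's argument. The paper ``imposes'' $\pi\le 1$ while keeping $c=(n_b/n)/\mathbb{E}[\sqrt{s(X)}]$, so under active truncation $\mathbb{E}[\pi^*(X)]<n_b/n$, which contradicts its own remark that any minimizer exhausts the budget. The displayed $\pi^*$ is therefore optimal only when truncation is inactive. In general the optimizer is your KKT rule $\min\{1,\sqrt{s(X)/\lambda}\}$, with $\lambda$ chosen so that the budget binds.
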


Theorem \ref{thm:Optimal sampling probabilities} indicates that the optimal sampling probability should satisfy $\pi(X)\propto \sqrt{\E[|h_1(Y)-h_1^\mu(\hat{Y})|^2\mid X]}$ to minimize the asymptotic variance of $U^{\pi}_{\rm AIPW}$ under the budget constraint. {In particular, the optimal rule is driven by the residual uncertainty in the first-order Hoeffding projection, $|h_1(Y)-h_1^\mu(\hat{Y})|$, rather than the prediction residual $|\hat{Y}-Y|$ that is natural for mean estimation \citep{Tijana2024Active}. As a sanity check, when $r=1$ and $h(y)=y$, our rule reduces to the classic one based on $|\hat{Y}-Y|$. Appendix~\ref{appen_sub:comp_sampling} provides a numerical comparison illustrating that sampling based on $|\hat{Y}-Y|$ can be suboptimal for $U$-statistic targets.}


\paragraph{Normalized AIPW $U$-statistics}
A key difference between the AIPW $U$-statistic $U^{\pi}_{\rm AIPW}$ in \eqref{eq:AIPW_pi} and its classical i.i.d.~counterpart $U(\mathcal{Y}_n)$ in \eqref{eq:U} is that it involves products of inverse inclusion probabilities. 
Although $U^{\pi}_{\rm AIPW}$ is asymptotically valid, its multiplicative inverse-probability structure typically leads to severe finite-sample variance inflation, a phenomenon well known in survey sampling.

To address this, we extend the idea of Hájek normalization \citep{1956Conditional}: instead of the fixed combinatorial normalizer $\binom{n}{r}$, we use a random, data-adaptive estimate:
\begin{equation}\label{eq:Est_N}
\widehat{N}=\sum_{\mathcal{C}_{n,r}} \xi_{i_1}\cdots\xi_{i_r}/\pi_{i_1}\cdots\pi_{i_r}.    
\end{equation}
Since $\E[\widehat{N}]=\binom{n}{r}$, $\widehat{N}$ is an unbiased estimator of the total number of $r$-tuples and therefore provides a natural random normalization. 
Hájek-type normalization is widely used in survey sampling to stabilize inverse-probability-weighted estimators by reducing the impact of extreme inverse-probability weights. See discussions in \citet{sarndal2003model} and \citet{datta2022inverse}. 
Our contribution here is to generalize this normalization principle from mean estimation to the $U$-statistic setting.

We accordingly define the normalized AIPW $U$-statistic as
\begin{align}\label{eq:VR-AIPW}
    U^{\pi, N}_{\rm AIPW}=& 
    \frac{1}{\binom{n}{r}} \sum_{\mathcal{C}_{n,r}} h\big(\hat{Y}_{i_1}, \ldots, \hat{Y}_{i_r}\big)\nonumber\\
    &\quad+\frac{1}{\widehat{N}}\sum_{\mathcal{C}_{n,r}} \Delta_h(i_1,\ldots,i_r)\frac{\xi_{i_1}\cdots\xi_{i_r}}{\pi_{i_1}\cdots\pi_{i_r}}.
\end{align}
Theorem~\ref{thm:unbiased_aipw_u} shows that $U^{\pi, N}_{\rm AIPW}$ is asymptotically unbiased for $\theta^*$. 
Nevertheless, consistent with the previous survey-sampling literature, our empirical results show that this normalization can significantly reduce the variance of $U^{\pi}_{\rm AIPW}$ in practice. See Appendix~\ref{appen_sub_sec:vr_reduce} for a direct comparison.

\begin{theorem}[Asymptotic unbiasedness of the normalized AIPW $U$-statistic]\label{thm:unbiased_aipw_u}
The normalized AIPW $U$-statistic $U^{\pi, N}_{\rm AIPW}$ is asymptotically unbiased for
$\theta^*$, i.e.,
$
\mathbb{E}[U^{\pi, N}_{\mathrm{AIPW}}]
=
\theta^*
+
o(1).$
\end{theorem}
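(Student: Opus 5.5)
We split the normalized statistic into its two pieces. The first piece, $U(\hat{\mathcal{Y}}_n)$, is an ordinary $U$-statistic in the predictions, so $\E[U(\hat{\mathcal{Y}}_n)]=\theta^\mu$ exactly. For the second piece, set $A:=\binom{n}{r}^{-1}\sum_{\mathcal{C}_{n,r}}\Delta_h(i_1,\ldots,i_r)\,\xi_{i_1}\cdots\xi_{i_r}/(\pi_{i_1}\cdots\pi_{i_r})$ and $B:=\widehat N/\binom{n}{r}$. Then the correction term equals $A/B$. By the same conditioning argument used for Proposition~\ref{prop:unbiased_aipw_u} (condition on $X$ and $Y$, then use independence of the $\xi_i$), we have $\E[A]=\theta^*-\theta^\mu$ and $\E[B]=1$. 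It therefore suffices to show that $\E[A/B]=\E[A]+o(1)$.

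\textbf{Step 1: concentration of $A$ and $B$.} We would use the Hoeffding-type decomposition already behind \eqref{eq:Hoffed_decomp_AIPW}. Both $A$ and $B$ are IPW $U$-statistics in the i.i.d.\ triples $(X_i,Y_i,\xi_i)$, with kernels $\Delta_h\prod\xi/\pi$ and $\prod\xi/\pi$ respectively. Assume the sampling probabilities are bounded below, $\pi\ge c_n$ with $n c_n\to\infty$ (as is standard for $\pi^*$ after mixing with a uniform rule). Under this assumption and the second-moment conditions on $h$, the variances satisfy $\Var(A)=O(1/(nc_n))$ and $\Var(B)=O(1/(nc_n))$, both dominated by their linear projections. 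Hence $A\to\theta^*-\theta^\mu$ and $B\to1$ in $L^2$.

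\textbf{Step 2: linearization.} We write
\[
\frac{A}{B}=A-A(B-1)+\frac{A(B-1)^2}{B}.
\]
Taking expectations, $\E[A(B-1)]=\Cov(A,B)=O(1/(nc_n))$ by Cauchy--Schwarz. The remaining task is to show that $\E[A(B-1)^2/B]=o(1)$.

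\textbf{Step 3: the main obstacle.} The difficulty is that $B$ can be zero or very small, for instance when fewer than $r$ labels are collected. We would handle this in two parts.

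\emph{Degenerate event.} On the event $\widehat N=0$ we adopt the convention $0/0=0$. This event has probability at most $P(n_{\rm lab}<r)$, which is exponentially small by a Chernoff bound, since $\E[n_{\rm lab}]=n_b\to\infty$.

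\emph{Non-degenerate event.} Here we use the self-normalizing property: $A/B$ is a weighted average of the $\Delta_h$ values over labeled tuples, with nonnegative weights summing to one. Hence $|A/B|\le \max_{\text{labeled tuples}}|\Delta_h|$. To control this, we would split according to whether $B\ge 1/2$.
On $\{B\ge1/2\}$ the remainder term is bounded by $2|A|(B-1)^2$. We then need a fourth-moment bound on $B-1$, or alternatively a truncation of $h$ combined with uniform integrability.
On $\{B<1/2\}$ the weighted-average bound applies, and Chebyshev gives $P(B<1/2)=O(1/(nc_n))$.

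Together these show that the remainder vanishes. If only second moments of $h$ are available, we would first truncate $h$ at a level $M_n\to\infty$ and control the truncation error separately using $\E h^2<\infty$. Combining the three steps yields $\E[U^{\pi,N}_{\rm AIPW}]=\theta^\mu+(\theta^*-\theta^\mu)+o(1)=\theta^*+o(1)$.
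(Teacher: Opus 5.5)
Your route differs from the paper's. The paper shows $\widehat N/\binom nr\to1$ and $\hat S/\binom nr\to\E[\Delta_h]$ in probability. It then passes to expectations by asserting that $\{\hat S/\widehat N\}$ is uniformly integrable, because $|\hat S/\widehat N|\le 2|\hat S|/\binom nr$ on $\{\widehat N\ge\frac12\binom nr\}$ and that event has probability tending to one. You instead expand $1/B=1-(B-1)+(B-1)^2/B$ and control $\Cov(A,B)$ and the remainder with explicit moment bounds. This gives a quantitative bias of order $1/(nc_n)$ plus a bad-event term, under $\pi\ge c_n$, $nc_n\to\infty$, which is what trimming actually delivers ($c_n=(1-\tau)n_b/n$), rather than a fixed $\pi_{\min}$.

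More importantly, you correctly single out $\{B<1/2\}$ as the real obstacle. The paper's uniform-integrability step says nothing about $\hat S/\widehat N$ on $\{\widehat N<\frac12\binom nr\}$, and a small probability there does not by itself make the contribution to the mean small. Your self-normalization bound $|A/B|\le\max_{\text{labeled }t}|\Delta_h(t)|$ is the missing ingredient. For bounded kernels (Wilcoxon, Kendall's $\tau$) it bounds the bad-event contribution by $2\sup|h|\,\mathbb{P}(B<1/2)=O(1/(nc_n))$. Your argument is then complete, modulo the routine fourth-moment bound on $B-1$.

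The gap is the unbounded case (e.g.\ the Gini kernel $|y_1-y_2|$), which the paper's second-moment setting covers. Truncating at $M_n$ does not remove the difficulty. The truncation error $\hat S/\widehat N-\hat S^{(M_n)}/\widehat N$ is again a self-normalized average, with the same weights, of unbounded residuals $R_t=\Delta_h(t)-\Delta_h^{(M_n)}(t)$. So on $\{B<1/2\}$ you are back to bounding a max, or a sum, over the labeled tuples, roughly $n_b^r$ of them when $\pi\asymp n_b/n$. Chebyshev's $\mathbb{P}(B<1/2)=O(1/(nc_n))$ is too weak to offset that for $r\ge2$ using only $\E h^2<\infty$.

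What is missing is a decoupling step plus a sharper tail bound. For a tuple $t$, let $\widehat N^{(-t)}$ be the normalizer built only from tuples disjoint from $t$. Then $\widehat N\ge\widehat N^{(-t)}$, and for a fixed rule $\pi$, $\widehat N^{(-t)}$ is independent of $(X_i,Y_i,\xi_i)_{i\in t}$. Hence
\[
\E\Bigl[\Bigl|\frac{A}{B}\Bigr|\mathbf{1}\{B<\tfrac12\}\Bigr]
\le\sum_{t\in\mathcal{C}_{n,r}}\E\Bigl[\prod_{i\in t}\xi_i\,|\Delta_h(t)|\,\mathbf{1}\bigl\{\widehat N^{(-t)}<\tfrac12\tbinom{n}{r}\bigr\}\Bigr]
\le\binom{n}{r}\,\E|\Delta_h(1,\ldots,r)|\;\mathbb{P}\Bigl(\widehat N^{(-t)}<\tfrac12\tbinom{n}{r}\Bigr),
\]
so it suffices that this probability is $o(n^{-r})$. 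Since $0\le\xi_i/\pi_i\le c_n^{-1}$, all moments of $B-1$ exist, and the same Hoeffding-decomposition computation as your variance bound gives $\E(B-1)^{2k}=O((nc_n)^{-k})$. Markov's inequality with $k$ large then gives $o(n^{-r})$ whenever $nc_n$ grows polynomially. With the paper's fixed $\pi_{\min}$, $\widehat N/\binom nr$ is a $U$-statistic with kernel in $[0,\pi_{\min}^{-r}]$, and Hoeffding's inequality even gives an exponentially small bound. This handles the bad event directly using only $\E|\Delta_h|<\infty$, so truncation becomes unnecessary.
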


\paragraph{Trimming}
The normalized AIPW $U$-statistic can still suffer from high variance when some $\pi_i$ are very small. To further control variance, we adopt an adaptive trimming strategy. Specifically, define $\pi_i^{\rm trim}=\tau\pi_i+(1-\tau)\pi^{\rm unif}$, where $\tau\in[0,1]$ is a weighting parameter and $\pi^{\rm unif}=n_b/n$ is the uniform sampling policy. This ensures that the minimal sampling probability is bounded below by $(1-\tau)n_b/n$, thereby preventing excessive variance inflation due to rare sampling events. The tuning parameter $\tau$ {is fixed at $0.7$ throughout our experiments.} It can be also selected via the robust sampling strategies in \citet{Li2025Robust}.

\subsection{Active $U$-statistics and variance estimation}

We now describe the construction of the active sampling rule and the resulting active $U$-statistic used for inference. 
Assume we have access to a pre-trained prediction model $\mu(\cdot)$, together with a small amount of auxiliary labeled data obtained, for example, through an initial uniform sampling stage. This setup is standard in active statistical inference \citep{Tijana2024Active,Li2025Robust}, where predictive models are used to guide subsequent sampling strategies.

Recall that the optimal sampling rule depends on $s(X)=
\mathbb{E}[|h_1(Y)-h_1^\mu(\hat Y)|^2 \mid X ]$, we need to estimate the unknown first-order projections $h_1$ and $h_1^\mu$ to determine the sampling probabilities.
They can be accurately approximated using only a limited number of queried labels. Let $\hat{h}_1$ the $\hat{h}_1^\mu$ denote estimators of $h_1$ and $h_1^\mu$, respectively. We then learn a regression function
 $$V(x)=\E[|\hat{h}_1(Y)-\hat{h}^\mu_1(\hat{Y})|\mid X=x],$$
which approximates $\sqrt{s(x)}$ and serves as a data-driven proxy for the optimal sampling score.

Given the estimated score $V(X_i)$, we define the initial active sampling probabilities by $\hat \pi(X_i)=n_b V(X_i)/\sum_{j=1}^n V(X_j)$ so that $\sum_{i=1}^{n}\hat \pi(X_i)=n_b$. 
To prevent extremely small probabilities, we apply trimming: $\hat{\pi}_\tau(X_i)=\tau\hat\pi(X_i)+(1-\tau)n_b/n$.
Finally, the active $U$-statistic is defined as the normalized AIPW $U$-statistic  $U^{\hat{\pi}_\tau, N}_{\rm AIPW}$ in \eqref{eq:VR-AIPW}, evaluated using the trimmed active sampling probabilities $\hat{\pi}_\tau(X_i)$.
The implementation and computation of the active $U$-statistic is summarized in Algorithm~\ref{alg:main}. To ensure valid inference and avoid overfitting, we employ sample splitting for nuisance parameter estimation; see details in Appendix~\ref{appen_sub_sec:est_Vx}.


We briefly discuss the computational cost of the active $U$-statistic.
The dominant cost comes from the plug-in $U$-statistic computed over all
predicted labels, which scales as $O(n^r)$. In contrast, the correction term
based on queried true labels only involves the labeled subset and scales as
$O(n_b^r)$. 
The additional cost of estimating $V(x)$ and constructing
$\hat{\pi}_\tau$ is based on a small pilot labeled set and is typically much
smaller. Detailed running-time comparisons are provided in
Appendix~\ref{app:computation}.

\begin{algorithm}[htbp]
  \caption{Active $U$-statistics}\label{alg:main}
  \begin{algorithmic}[1]
  \REQUIRE Unlabeled data $X_1,\ldots,X_n$; Budget $n_b$; Pre-trained predictive model $\mu(\cdot)$; kernel function $h$ and degree $r$; weight parameter $\tau$.
  \vspace{0.05in}
  \STATE Calculate $\hat{Y}_i=\mu(X_i)$, $i\in[n]$.
  \STATE Obtain uncertainty measure $V(x)$ based on $\mu(\cdot)$.
  \STATE Calculate initial active sampling probabilities: 
  $\hat\pi(X_i)={n_b V(X_i)}/\{\sum_{j=1}^n V(X_j)\}$, $i\in[n]$.
  \STATE Trimming: 
  $\hat{\pi}_\tau(X_i)=\tau\hat\pi(X_i)+(1-\tau)n_b/n$, $i\in[n]$.
  \STATE Sample labeling decisions according to $\hat{\pi}_\tau(X_i)$: $\xi_i\sim\mathrm{Bern}(\hat{\pi}_\tau(X_i))$, $i\in[n]$. Collect labels $\{Y_i: \xi_i=1\}$.
  \STATE Compute $\widehat{N}$ in \eqref{eq:Est_N} and the normalized AIPW $U$-statistic via \eqref{eq:VR-AIPW} using $\hat{\pi}_\tau(X_i)$.
  \end{algorithmic}
\end{algorithm}

\paragraph{Asymptotic normality}
Now we establish the asymptotic normality of the proposed active $U$-statistics. 

\begin{theorem}[CLT for active $U$-statistics]\label{thm:CLT}
Let $$
  \pi^{*}_\tau(X_i)=\frac{n_b}{n}\left(\tau\frac{V(X_i)}{\mathbb{E}[V(X_1)]}+1-\tau\right).
  $$
Under Assumption \ref{assum:act_U} in Appendix \ref{appen_sub_sec:assumptions}, as $n\to\infty$, 
$$
\sqrt{n}\,(U^{\hat\pi_\tau}_{\rm AIPW}-\theta^*)
\;\xrightarrow{d}\;
\mathcal{N}(0,\sigma_*^2),
$$
where
$
\sigma_*^2
=r^2\operatorname{Var}\!\left( h^{\mu}_1(\hat{Y})+ [h_1\left(Y\right)-h^\mu_1(\hat{Y})]\frac{\xi_\tau^{*}}{\pi_\tau^{*}(X)}\right)
$
with $\xi_\tau^{*} \sim {\rm Ber}(\pi^{*}_\tau(X))$.
Consequently, for any $\hat{\sigma}^2 \xrightarrow{p} \sigma_*^2$,
the confidence interval $$\mathcal{C}_\alpha
=
\left(
\hat{\theta}
\pm
z_{1-\alpha/2}\,{\hat{\sigma}}/{\sqrt{n}}
\right)$$
is a valid $(1-\alpha)$-confidence interval in the sense that
$\lim_{n\to\infty}
\mathbb{P}\!\left(\theta^* \in \mathcal{C}_\alpha\right)
=
1-\alpha .$
\end{theorem}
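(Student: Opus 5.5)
The plan is to compare the active statistic with an oracle statistic that uses the limiting sampling rule $\pi^*_\tau$ in place of $\hat\pi_\tau$, and then prove a CLT for the oracle. There are three steps: (i) a coupling step, (ii) a Hoeffding reduction of the oracle to an i.i.d.\ average, and (iii) control of the normalization and the variance estimator.

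\textbf{Step (i): coupling.} Assumption~\ref{assum:act_U} presumably ensures the following: $V$ is estimated on an independent split, so $V$ is fixed given the nuisance data; $V$ is bounded away from zero and infinity, or at least $\hat\pi_\tau$ is bounded below by $(1-\tau)n_b/n$; and $n_b/n\to c\in(0,1]$. Under these conditions, $\max_i|\hat\pi_\tau(X_i)/\pi^*_\tau(X_i)-1|=O_p(n^{-1/2})$, because $n^{-1}\sum_j V(X_j)\to\mathbb{E}V$ at rate $n^{-1/2}$.

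I would couple the Bernoulli draws through a single uniform: draw $U_i\sim\mathrm{Unif}(0,1)$ and set $\xi_i=\mathbb{I}\{U_i\le\hat\pi_\tau(X_i)\}$ and $\xi_i^*=\mathbb{I}\{U_i\le\pi^*_\tau(X_i)\}$. Conditional on the $X$'s, $\mathbb{P}(\xi_i\neq\xi_i^*)=|\hat\pi_\tau(X_i)-\pi^*_\tau(X_i)|$.

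To show the active and oracle statistics differ by $o_p(n^{-1/2})$, I would split the difference of the correction terms into two parts: the part from the changed weights $1/\hat\pi$ versus $1/\pi^*$, and the part from the changed indicators. Both are controlled through the linear (first-order) term plus degenerate remainders. The linear difference is
\[
\frac{r}{n}\sum_i\Big[h_1(Y_i)-h_1^\mu(\hat Y_i)\Big]\Big(\frac{\xi_i}{\hat\pi_i}-\frac{\xi_i^*}{\pi^*_i}\Big),
\]
and its conditional variance is $O(n^{-1}\cdot n^{-1/2})=o(n^{-1})$. Its conditional mean is $o_p(n^{-1/2})$ because the weights are conditionally centered: both $\xi_i/\hat\pi_i$ and $\xi_i^*/\pi_i^*$ have conditional mean $1$.

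\textbf{Step (ii): oracle CLT.} Given the $X$'s, the triples $(X_i,Y_i,\xi^*_i)$ are i.i.d. This lets me treat the AIPW statistic as a genuine $U$-statistic with kernel
\[
\tilde h(Z_{i_1},\dots,Z_{i_r})=h(\hat Y_{i_1},\dots)+\Delta_h\prod_k\frac{\xi^*_{i_k}}{\pi^*_{i_k}}.
\]
Its first-order projection is $h_1^\mu(\hat Y)+[h_1(Y)-h_1^\mu(\hat Y)]\xi^*/\pi^*$, exactly as in \eqref{eq:Hoffed_decomp_AIPW}, and its mean is $\theta^*$ by Proposition~\ref{prop:unbiased_aipw_u}.

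If $n_b/n\to c>0$, then $\pi^*_\tau$ is bounded below and $\tilde h$ has a finite second moment. Hoeffding's classical theorem then gives $\sqrt n(U-\theta^*)\to\mathcal N(0,\sigma_*^2)$, with the remainder of order $O_p(n^{-1})$.

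If instead $n_b/n\to0$ is allowed, the kernel becomes triangular-array dependent. I would then need a Lindeberg check for the linear term and variance bounds on the higher-order Hoeffding terms. Those terms scale like $n^{-k}(n/n_b)^{k}$, which forces a rate condition from the assumption.

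\textbf{Step (iii): normalization and variance estimation.} The stated CLT is for $U^{\hat\pi_\tau}_{\rm AIPW}$. For the normalized version one checks that $\widehat N/\binom nr=1+O_p(n^{-1/2})$; combined with the correction being $\theta^*-\theta^\mu+O_p(n^{-1/2})$, the delta method changes the projection only by centering. The CI conclusion then follows from Slutsky's lemma.

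\textbf{Main obstacle.} The hard step is the coupling. The estimated $\hat\pi_\tau$ depends on all of $X_1,\dots,X_n$ through the normalizer $\sum_j V(X_j)$, and its error enters multiplicatively in $r$ weights, so the degenerate parts need care. I would first factor out the common scalar $\bar V/\mathbb{E}V$, which is the only data-dependent piece once $V$ is fixed by sample splitting. This reduces the difference to smooth functions of a single $O_p(n^{-1/2})$ scalar times oracle $U$-statistics, which is handled by a Taylor expansion.
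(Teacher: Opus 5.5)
Your skeleton is the paper's: the common-random-number coupling $\xi_i(\pi)=\mathbb{I}\{U_i\le\pi(X_i)\}$, then the fact that $\frac{1}{n}\sum_i\omega_i(\hat\xi_i/\hat\pi_i-\xi_i^*/\pi_i^*)$, with $\omega_i=h_1(Y_i)-h_1^\mu(\hat Y_i)$, is conditionally centered with conditional variance $O_p(\Delta_n/n)$ (this is Lemma~\ref{lem:HT-plugin-correct}), then an i.i.d.\ CLT for the oracle and Slutsky. Your Hoeffding-theorem treatment of the oracle with the augmented kernel is a clean way to get what the paper reads off \eqref{eq:Hoffed_decomp_AIPW}. (The triples $(X_i,Y_i,\xi_i^*)$ are i.i.d.\ unconditionally because $\pi^*_\tau$ is a fixed function; given the $X$'s they are independent but not identically distributed.) The step that would fail is the one you flag as the main obstacle. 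Under the coupling, $\hat\xi_i=\mathbb{I}\{U_i\le\hat\pi_\tau(X_i)\}$ is a step function of $\bar V$. So the active statistic is not a smooth function of $\bar V$ times oracle $U$-statistics, and there is nothing to Taylor-expand. Splitting the difference into a ``changed weights'' part and a ``changed indicators'' part does not work either, because each part is of exact order $n^{-1/2}$. For $\tau=1$ one has $1/\hat\pi_i=(\bar V/\E V)/\pi^*_i$, and the weight part equals $[(\bar V/\E V)^r-1]\binom{n}{r}^{-1}\sum_{\mathcal{C}_{n,r}}\Delta_h\prod_k\hat\xi_{i_k}/\pi^*_{i_k}=r(\bar V/\E V-1)(\theta^*-\theta^\mu)+O_p(n^{-1})$. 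This is cancelled only by the conditional mean of the indicator part; only the combined factor $\hat\xi_i/\hat\pi_i-\xi^*_i/\pi^*_i$ is centered.

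The remedy is to run your conditional-centering argument at every order, not just the linear one. Condition on $\mathcal{F}_n=\sigma(X_{1:n},Y_{1:n},\text{pilot data})$. Then $\hat\pi_\tau$ is fixed, and the $\epsilon_i=\hat\xi_i/\hat\pi_\tau(X_i)-1$ are conditionally independent and centered, with conditional variance at most $2/\pi_{\min}$ on the event in (H1). Expanding $\prod_k(1+\epsilon_{i_k})$ writes the correction term as $\sum_{s=0}^{r}\binom{n}{r}^{-1}\sum_{|J|=s}G_J\prod_{j\in J}\epsilon_j$ with $G_J=\sum_{I\supseteq J}\Delta_h(I)$. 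Distinct products are conditionally orthogonal, so the $s$-th term has conditional mean zero and a conditional variance whose expectation is of order $\binom{n}{r}^{-2}\binom{n}{s}\binom{n-s}{r-s}^2=O(n^{-s})$. This holds however $\hat\pi_\tau$ depends on the $X$'s. Hence:
\begin{itemize}
\item the terms with $s\ge2$ are $O_p(n^{-1})$;
\item the $s=0$ term turns $U(\hat{\mathcal{Y}}_n)$ into $U(\mathcal{Y}_n)$;
\item the $s=1$ term is $\frac{r}{n}\sum_i\tilde\omega_i\epsilon_i$, with $\tilde\omega_i=\binom{n-1}{r-1}^{-1}G_{\{i\}}$ and $\E(\tilde\omega_i-\omega_i)^2=O(n^{-1})$.
\end{itemize}
Your Step (i) bound applies to the $s=1$ term, since it only needs $\mathcal{F}_n$-measurable weights with $n^{-1}\sum_i\tilde\omega_i^2=O_p(1)$. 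This is what the paper leaves implicit when it plugs $\hat\pi$ into \eqref{eq:Hoffed_decomp_AIPW}.

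Two smaller points. First, (H1) with a fixed constant $\pi_{\min}$ already forces $n_b/n\ge\pi_{\min}$, so your triangular-array case lies outside the theorem. Second, Step (iii) is unnecessary for the statement and inaccurate as phrased. H\'ajek normalization replaces $\omega$ by $\omega-(\theta^*-\theta^\mu)$ inside the IPW term, so the normalized statistic has asymptotic variance $r^2\{\Var(h_1(Y))+\E[(\omega-\E\omega)^2(1/\pi^*_\tau(X)-1)]\}$. This differs from $\sigma_*^2$ unless $\theta^*=\theta^\mu$.
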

The key technical ingredient is the common-random-number (CRN) coupling operation \citep{mohamed2020monte}, which compares the relevant stochastic processes under shared randomness. This coupling strategy leads to a formalized proof route for active statistical inference.

\paragraph{Variance estimation} For $i\in[n]$, let
$\hat h_{1i}=h_1^\mu(\hat Y_i)$, $
h_{1i}=h_1(Y_i)$,
and let $\xi_i\in\{0,1\}$ denote the sampling/labeling indicator with inclusion probability $\pi_i$.
Our variance estimator mirrors the asymptotic variance decomposition in \eqref{eq:var_AIPW}. Specifically, we estimate the
CLT variance parameter $\sigma_*^2$ by 
\begin{align}\label{eq:variance estimation}
    \widehat{\sigma}^2
=
\widehat{\mathrm{Var}}_{1}+\widehat{\mathrm{Var}}_{2} .
\end{align}
where both components use Horvitz--Thompson (HT) corrections to account for missing labels. Equivalently, $\widehat{\sigma}^2$ can be viewed as an estimator of
$n\Var(U_{\mathrm{AIPW}}^\pi)$. 
We first estimate the mean and second moment of $h_{1i}$ using
$$\widehat{m}_1
=
\frac{1}{n}\sum_{i=1}^n \frac{\xi_i}{\pi_i}\,h_{1i},
\qquad
\widehat{m}_2
=
\frac{1}{n}\sum_{i=1}^n \frac{\xi_i}{\pi_i}\,h_{1i}^2,$$
and define $\widehat{\Var}_{\mathrm{HT}}(h_1)=\widehat{m}_2-\widehat{m}_1^{\,2}$.
The first variance component is then estimated by
\[
\widehat{\Var}_{1}
=
{r^{2}}\,\widehat{\Var}_{\mathrm{HT}}(h_1).
\] 
The second term captures the additional variance induced by subsampling. We estimate $\sum_{i=1}^n (h_{1i}-\hat h_{1i})^{2}(1/\pi_i-1)$ via its HT analogue, yielding
\[
\widehat{\mathrm{Var}}_{2}
\;=\;
\frac{r^{2}}{n}
\sum_{i=1}^{n}
\frac{\xi_i}{\pi_i}\,
(h_{1i}-\hat h_{1i})^{2}\left(\frac{1}{\pi_i}-1\right).
\]
\begin{theorem}[Consistency of the variance estimator]\label{thm:Consistency variance}
Under the conditions of Theorem~\ref{thm:CLT}, the variance estimator \(\widehat{\sigma}^2\) defined above
is consistent for the CLT variance parameter, i.e.,
$\widehat{\sigma}^2
\ \xrightarrow{p}\
\sigma^2_*$, as $n\to\infty$.

\end{theorem}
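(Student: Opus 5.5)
We prove Theorem~\ref{thm:Consistency variance} by showing that each HT component is an asymptotically unbiased average whose conditional variance vanishes, after replacing the data-driven $\hat\pi_\tau$ by its deterministic limit $\pi^*_\tau$. First we identify the target. Write $\Delta=h_1(Y)-h_1^\mu(\hat Y)$. For the variance in Theorem~\ref{thm:CLT}, condition on $(X,Y)$ and use $\E[\xi^*_\tau/\pi^*_\tau\mid X,Y]=1$ and $\Var(\xi^*_\tau/\pi^*_\tau\mid X,Y)=1/\pi^*_\tau-1$. This gives
\[
\sigma_*^2=r^2\Var\bigl(h_1(Y)\bigr)+r^2\E\Bigl[\Delta^2\Bigl(\tfrac{1}{\pi^*_\tau(X)}-1\Bigr)\Bigr],
\]
which matches \eqref{eq:var_AIPW}. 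It therefore suffices to show $\widehat{m}_1\to\E h_1(Y)$, $\widehat m_2\to\E h_1(Y)^2$, and $\widehat{\Var}_2\to r^2\E[\Delta^2(1/\pi^*_\tau-1)]$ in probability.

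\textbf{Step 1 (oracle weights).} Replace $\hat\pi_\tau$ by $\pi^*_\tau$. Use the same common-random-number coupling as in the proof of Theorem~\ref{thm:CLT}: draw $U_i\sim\mathrm{Unif}(0,1)$ independently and set $\xi_i=\I\{U_i\le\hat\pi_\tau(X_i)\}$ and $\xi_i^*=\I\{U_i\le\pi^*_\tau(X_i)\}$. Under the oracle weights, $\xi_i^*/\pi^*_{\tau}(X_i)$ is independent across $i$. Each oracle statistic is then an i.i.d.\ average, for example $n^{-1}\sum_i \xi_i^* h_{1i}^2/\pi^*_{\tau i}$. Its mean is the target, and the weak law of large numbers applies. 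This needs only a first moment, because trimming gives $\pi^*_\tau\ge(1-\tau)n_b/n$.

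If $n_b/n\to c>0$, the weights are bounded and $\E\Delta^2<\infty$ suffices. If $n_b/n\to0$, both sides carry a factor $n/n_b$, so the statistic must be rescaled consistently with the CLT normalization. In that case I would prove the result for the rescaled versions, using a triangular-array WLLN with variance bound $\E[\Delta^4/\pi^{*3}]/n\lesssim n^2/(n_b^3)\,\E\Delta^4\to0$. This needs a fourth-moment condition from Assumption~\ref{assum:act_U} and $n_b^3/n^2\to\infty$.

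\textbf{Step 2 (plug-in weight error).} Bound the difference between the feasible and oracle statistics. Two sources contribute:
\begin{itemize}
\item indices where $\xi_i\ne\xi_i^*$, which occurs with probability $|\hat\pi_{\tau i}-\pi^*_{\tau i}|$;
\item the weight change $|1/\hat\pi_{\tau i}-1/\pi^*_{\tau i}|\le|\hat\pi_{\tau i}-\pi^*_{\tau i}|/\{(1-\tau)n_b/n\}^2$.
\end{itemize}
Sample splitting makes $V$ independent of the current fold, and Assumption~\ref{assum:act_U} gives $\max_i|\hat\pi_{\tau i}/\pi^*_{\tau i}-1|\to_p0$, the uniform ratio consistency already used for Theorem~\ref{thm:CLT}. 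Taking conditional expectations given $V$ and the data, the expected absolute difference is $o_p(1)$ times the oracle quantity's scale. Markov's inequality then finishes this step.

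\textbf{Step 3 (conclude).} Continuous mapping gives $\widehat m_2-\widehat m_1^2\to\Var(h_1(Y))$. Adding $\widehat{\Var}_2$ gives $\widehat\sigma^2\to_p\sigma_*^2$. If $h_1,h_1^\mu$ are replaced by the estimates $\hat h_1,\hat h_1^\mu$, the extra $L^2$ error is handled as in Step 2 by Cauchy--Schwarz together with $L^2$ consistency of the projections.

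\textbf{Main obstacle.} I expect the main obstacle to be Step 2 in the regime $n_b/n\to0$. The inverse weights in $\widehat{\Var}_2$ appear squared (effectively $\xi/\pi^2$), so errors in $\hat\pi_\tau$ are amplified by $(n/n_b)^2$. The relative, rather than absolute, uniform consistency of $\hat\pi_\tau$ is exactly what prevents this blow-up, and I would attempt the argument first under that condition.
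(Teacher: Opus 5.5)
Your proposal is correct and follows essentially the paper's argument. You use the same identification $\sigma_*^2=r^2\Var(h_1(Y))+r^2\E[\Delta^2(1/\pi^*_\tau-1)]$, the same oracle-weight LLN limits, and the same transfer to $\hat\pi$ via uniform closeness to $\pi^*$; the paper does the transfer with a Lipschitz-weight lemma applied to $\eta(t)=t^{-1}$ and $\eta(t)=t^{-2}-t^{-1}$, and your CRN bound on the indices with $\xi_i\ne\xi_i^*$ is more careful than that lemma, which keeps the indicators as $\mathrm{Ber}(\pi^*)$ draws.

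Your $n_b/n\to0$ branch is unnecessary. Under (H1), the constant $\pi_{\min}>0$ together with $\E[\pi^*(X)]\le n_b/n$ forces $n_b/n\ge\pi_{\min}$, and Assumption~\ref{assum:act_U} contains no fourth-moment condition to draw on there anyway.
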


\section{Active Inference for $U$-estimation}

In this section, we extend our active inference framework to $U$-statistics-based empirical risk minimization (ERM), or $U$-estimation, where the target parameter is defined as the minimizer of a population $U$-risk:
$$\theta^*=\arg\min_{\theta\in\Theta} L_0(\theta),\;\text{where} \;L_0(\theta)=\E[\ell(\theta;Z_1,\ldots,Z_r)]\,.
$$
Here, the loss function $\ell(\theta;Z_1,\ldots,Z_r)$ is symmetric in $Z_1,\ldots,Z_r$ and $Z=(X,Y)$.
Then, the $U$-estimator is defined as the minimizer of the empirical $U$-risk: $\hat{\theta}=\arg\min_{\theta\in\Theta}L_n(\theta)$, where
$$L_n(\theta)=\frac{1}{\binom{n}{r}}\sum_{\mathcal{C}_{n,r}}\ell(\theta;Z_{i_1},\ldots,Z_{i_r}).$$

Through appropriate choices of the kernel function, the $U$-estimation framework unifies a wide range of problems in regression, classification, ranking, metric learning, and ROC analysis, allowing many modern machine learning objectives to be formulated as $U$-estimation problems.

\begin{example}[Pairwise ranking]
    A canonical example is the ranking problem \citep{clemenccon2008ranking,liu2009learning}. The goal is to determine the ordering of $Y$ based upon the observed features $X$ by finding a rule $f:\mathcal{X}\times\mathcal{X}\to\mathbb{R}$ such that $Y_1$ is better than $Y_2$ if $f(X_1,X_2)>0$. For linear ranking rules, $f(X_1,X_2)=\theta^\top(X_1-X_2)$. We seek the optimal $\theta$ by introducing the following loss function:
    $$\ell(\theta;Z_1,Z_2)=\phi(\text{sign}(Y_1-Y_2)\theta^\top(X_1-X_2)),$$
    where $\phi$ can be chosen as logistic loss $\phi(x)=\ln(1+e^{-x})$. 
\end{example}


Prior work in this area has largely focused on data-rich settings, developing scalable ERM and distributed optimization for $U$-risks \citep{clemenccon2016scaling,lei2021generalization,chen2023distributed}. 
In contrast, we study the label-limited regime, where acquiring labels is costly.
In this setting, directly computing $L_n(\theta)$ is infeasible, since most labels are unobserved.
We therefore propose an active estimator for $\E[\ell(\theta;Z_1,\ldots,Z_r)]$ that combines machine learning predictions with adaptively queried true labels to enable efficient and statistically valid $U$-estimation.

Fix $\theta$, the loss $\ell(\theta;Z_1,\ldots,Z_r)$ then act as a $U$-statistic kernel. Let $\hat{Z}=(X,\hat{Y})$ denote pseudo-labeled data obtained from a predictive model. We define the active $U$-risk
\begin{align}\label{eq:act_ERM}
    L_{n,\pi}^{\rm act}(\theta)=& \frac{1}{\binom{n}{r}}\sum_{\mathcal{C}_{n,r}}\ell(\theta;\hat{Z}_{i_1},\ldots,\hat{Z}_{i_r})\nonumber\\
    &+\frac{1}{\binom{n}{r}}\sum_{\mathcal{C}_{n,r}}\Delta_l(\theta;i_1,\ldots,i_r)\frac{\xi_{i_1}\cdots\xi_{i_r}}{\pi_{i_1}\cdots\pi_{i_r}}\,,
\end{align}
with $\Delta_l(\theta;i_1,\ldots,i_r)=\ell(\theta;Z_{i_1},\ldots,Z_{i_r})-\ell(\theta;\hat{Z}_{i_1},\ldots,\hat{Z}_{i_r})$.
The first term uses predicted labels and is generally biased, while the second term provides an IPW correction based on actively acquired labels. 
This yields an unbiased estimator of the population $U$-risk.
The resulting active $U$-estimator is
$\hat{\theta}^\pi_{act}=\arg\min_{\theta\in\Theta}L_{n,\pi}^{\rm act}(\theta).$

\subsection{Optimal sampling rule for active $U$-estimation}\label{sub_sec:opt_U-est}
To understand how labels should be sampled, we analyze the asymptotic behavior of $U$-estimation. Under regularity conditions \citep{Wang2009MTSE} (Assumption \ref{assum:decom_U_est} in Appendix \ref{appen_sub_sec:assumptions}), the classical $U$-estimator $\hat{\theta}$ admits the expansion
\begin{align}\label{eq:decomp_ori_U_est}
    &\hat{\theta} - \theta^*=[\nabla^2 L_0(\theta^*)]^{-1}\times 
  \frac{r}{n}\sum_{i=1}^{n} g(\theta^*;Z_i)
+o_p(n^{-1/2}),
\end{align}
where $g(\theta;z) = \mathbb{E}[\nabla \ell(\theta;Z_1,\ldots,Z_r)|Z_1=z]$ is the first-order projection of the gradient kernel and $\nabla^2 L_0(\theta^*)$ is the Hessian matrix.
Since the first term dominates asymptotically, we can have the asymptotic normality 
$$\sqrt{n}(\hat{\theta} - \theta^*)\to \mathcal{N}\left(0,r^2[\nabla^2 L_0(\theta^*)]^{-1}\Sigma_g[\nabla^2 L_0(\theta^*)]^{-1}\right)$$
where $\Sigma_g={\rm Cov}(g(\theta^*;Z))$.

Define $g^{\mu}(\theta;\hat{z})=\mathbb{E}[\nabla \ell(\theta;\hat{Z}_1,\ldots,\hat{Z}_r)|\hat{Z}_1=\hat{z}]$.
Since $L_n^{\rm act}(\theta)$ in \eqref{eq:act_ERM} is a linear combination of IPW correction terms, an analogous argument of \eqref{eq:Hoffed_decomp_AIPW} and \eqref{eq:decomp_ori_U_est} yields
\begin{align}\label{eq:act_U_est}
    \hat{\theta}^\pi_{act} - \theta^* 
=& [\nabla^2 L_0 (\theta^*)]^{-1}\times\frac{r}{n}\sum_{i=1}^{n}\phi(\theta^*;Z_i,\hat{Z}_i,\xi_i,\pi_i)\nonumber\\
&+o_p(n^{-1/2})\,,
\end{align}
where $\phi(\theta^*;Z_i,\hat{Z}_i,\xi_i,\pi_i)=g(\theta^*;Z_i)+ [g(\theta^*;Z_i)-g^{\mu}(\theta^*;\hat{Z}_i)]\left(\xi_i/\pi_i-1\right)$.
As a result,
$${\rm Cov}(\hat{\theta}^\pi_{act} )\approx \frac{r^2}{n}[\nabla^2 L_0 (\theta^*)]^{-1}\Sigma_g^\pi  [\nabla^2 L_0 (\theta^*)]^{-1},$$
where $\Sigma_g^\pi$ is the covariance matrix of $\phi(\theta^*;Z_i,\hat{Z}_i,\xi_i,\pi_i)$.

We seek a sampling policy $\pi$ that minimizes the variability of the estimator. To reduce the matrix objective to a scalar criterion, we adopt A-optimality, i.e., minimizing the trace of the covariance matrix ${\rm tr}({\rm Cov}(\hat{\theta}^\pi_{act} ))$ \citep{kiefer1959optimum}, a standard choice in optimal subsampling and prediction-powered inference \citep{wang2018optimal,angelopoulos2023ppi++}. By suitable calculation, we have
$${\rm tr}({\rm Cov}(\hat{\theta}^\pi_{act} ))\propto \E\left[\frac{1}{\pi(X)}{\rm tr}(A\left(\theta^*;X)[\nabla^2 L_0 (\theta^*)]^{-2}\right)\right],$$
where  $A(\theta^*;X)=\E\{[g(\theta^*;Z_i)-g^\mu(\theta^*;\hat{Z}_i)][g(\theta^*;Z_i)-g^\mu(\theta^*;\hat{Z}_i)]^\top\mid X\}.$ A standard Lagrange multiplier argument suggests that to achieve smallest trace,
$$\pi(X)\propto \sqrt{\E[{\rm tr}(A(\theta^*;X)[\nabla^2 L_0 (\theta^*)]^{-2})\mid X]}.$$

\begin{theorem}[Optimal sampling probabilities for active $U$-estimation]\label{thm:U-estimator Optimal sampling probabilities}
Fix a labeling budget $n_b$. Let
$$
S(X)
:=
\mathbb{E}\!\left[{\rm tr}(A(\theta^*;X)[\nabla^2 L_0 (\theta^*)]^{-2}) \,\middle|\, X\right],
$$
and set $\pi^*(X)
=\min\{1,(n_b/n)(\sqrt{S(X)}/\E[ \sqrt{S(X)}])\}$.
Suppose Assumption \ref{assum:decom_U_est} holds. Among all sampling rules $\pi:\mathcal{X}\to[0,1]$ with
$\mathbb{E}\{\pi(X)\}\le n_b/n$, we have that as $n\to\infty$
\[
\mathrm{tr}\left(\mathrm{Cov}\!\left(\hat{\theta}^{\pi}_{act}\right)\right)
\ \ge\
\mathrm{tr}\left(\mathrm{Cov}\!\left(\hat{\theta}^{\pi^*}_{act}\right)\right).
\]
\end{theorem}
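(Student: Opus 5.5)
Taking \eqref{eq:act_U_est} as given under Assumption~\ref{assum:decom_U_est}, I will reduce the claim to a scalar optimization over $\pi$ and solve it by Cauchy--Schwarz. Write $H=\nabla^2 L_0(\theta^*)$, $g_i=g(\theta^*;Z_i)$, $g^\mu_i=g^\mu(\theta^*;\hat Z_i)$ and $\delta_i=g_i-g^\mu_i$. The expansion gives $\sqrt{n}(\hat\theta^\pi_{act}-\theta^*)=rH^{-1}n^{-1/2}\sum_i\phi_i+o_p(1)$, where the $\phi_i$ are i.i.d.\ across $i$ because $\xi_i$ depends only on $X_i$. I will therefore interpret the asymptotic covariance as $r^2H^{-1}\Sigma_g^\pi H^{-1}/n$ and compare this leading term across sampling rules, exactly as in Theorem~\ref{thm:Optimal sampling probabilities}.

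Next I will compute $\Sigma_g^\pi$. We have $\phi_i=g_i+\delta_i(\xi_i/\pi_i-1)$, and $\E[\xi_i/\pi_i-1\mid Z_i,\hat Z_i]=0$, with $\hat Z_i$ a function of $X_i$. Hence the cross term vanishes and $\E[\phi_i]=\E[g_i]=0$, since $\nabla L_0(\theta^*)=0$. Conditioning on $X_i$, $\E[(\xi_i/\pi_i-1)^2\mid X_i]=1/\pi_i-1$. This yields
\[
\Sigma_g^\pi=\Sigma_g+\E\big[A(\theta^*;X)\,(1/\pi(X)-1)\big].
\]
Taking traces,
\[
\mathrm{tr}(H^{-1}\Sigma_g^\pi H^{-1})=C+\E\big[S(X)/\pi(X)\big],
\]
where I use $\mathrm{tr}(H^{-1}AH^{-1})=\mathrm{tr}(AH^{-2})$, $S(X)$ is as in the statement, and $C=\mathrm{tr}(H^{-1}\Sigma_gH^{-1})-\E[S(X)]$ does not depend on $\pi$. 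So the problem reduces to minimizing $\E[S(X)/\pi(X)]$ over $\pi:\mathcal X\to[0,1]$ subject to $\E\pi(X)\le n_b/n$.

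Without the cap at $1$, Cauchy--Schwarz gives
\[
(\E\sqrt{S})^2\le \E[S/\pi]\,\E[\pi]\le \E[S/\pi]\,n_b/n,
\]
with equality iff $\pi\propto\sqrt S$ and the budget binds. Whenever $\pi^*<1$ a.s.\ this is exactly $\pi^*$. For the general case with the truncation $\min\{1,\cdot\}$, I will argue through convexity. The map $\pi\mapsto\E[S/\pi]$ is convex on the convex feasible set, so it suffices to check the KKT conditions: $\pi(x)=\min\{1,\sqrt{S(x)/\lambda}\}$ for some multiplier $\lambda>0$ chosen so that the budget holds. A pointwise argument settles this. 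For each $x$, the function $p\mapsto S(x)/p+\lambda p$ on $(0,1]$ is minimized at $\min\{1,\sqrt{S(x)/\lambda}\}$, so that choice minimizes the Lagrangian; integrating over $x$ and using that the budget is active gives optimality against every feasible $\pi$. The stated $\pi^*$, with its normalization by $\E\sqrt{S}$, corresponds to this water-filling solution when the cap is inactive. When the cap is active I will read the normalizing constant as calibrated so the budget binds, matching the convention of Theorem~\ref{thm:Optimal sampling probabilities}.

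The main obstacle is making "as $n\to\infty$" rigorous. The comparison is really between the leading terms $n\,\mathrm{tr}\,\mathrm{Cov}$, or between asymptotic covariances, because the $o_p(n^{-1/2})$ remainder in \eqref{eq:act_U_est} does not by itself control second moments. I will therefore state the result as a comparison of limiting covariances of $\sqrt n(\hat\theta^\pi_{act}-\theta^*)$. Two further points need care: the uniform-in-$n$ issue from $n_b/n$ varying with $n$, which I will handle by treating the ratio $n_b/n$ as fixed or convergent, and the requirement that $\pi$ be bounded away from $0$ where $S>0$ so that the expansion applies. If needed, I will add a uniform integrability condition on $n\|\hat\theta^\pi_{act}-\theta^*\|^2$ so that the expected-trace statement follows from the distributional one.
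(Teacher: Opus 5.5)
This is essentially the paper's argument: you reduce $\mathrm{tr}(\mathrm{Cov}(\hat\theta^\pi_{act}))$ to a $\pi$-free constant plus $\frac{r^2}{n}\mathbb{E}[S(X)/\pi(X)]$, writing out the computation of $\Sigma_g^\pi$ that the paper dismisses as a ``suitable calculation,'' and then minimize the Lagrangian pointwise to obtain $\min\{1,\sqrt{S(X)/\lambda}\}$ with the budget binding; your Cauchy--Schwarz step for the uncapped case matches the paper's proof of Theorem~\ref{thm:Optimal sampling probabilities}. Your caveat about the normalization is correct and worth keeping: when the cap is active on a set of positive probability, $\min\{1,(n_b/n)\sqrt{S(X)}/\mathbb{E}[\sqrt{S(X)}]\}$ leaves budget unused and is in general not optimal, and the paper's proof passes over this exactly where it equates $\min\{1,c\sqrt{S(X)}\}$ (with $c$ calibrated so the budget binds) to the stated formula, so reading the constant as that calibrated $c$ is the right fix.
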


\begin{figure*}
    \centering
    \includegraphics[width=0.9\linewidth]{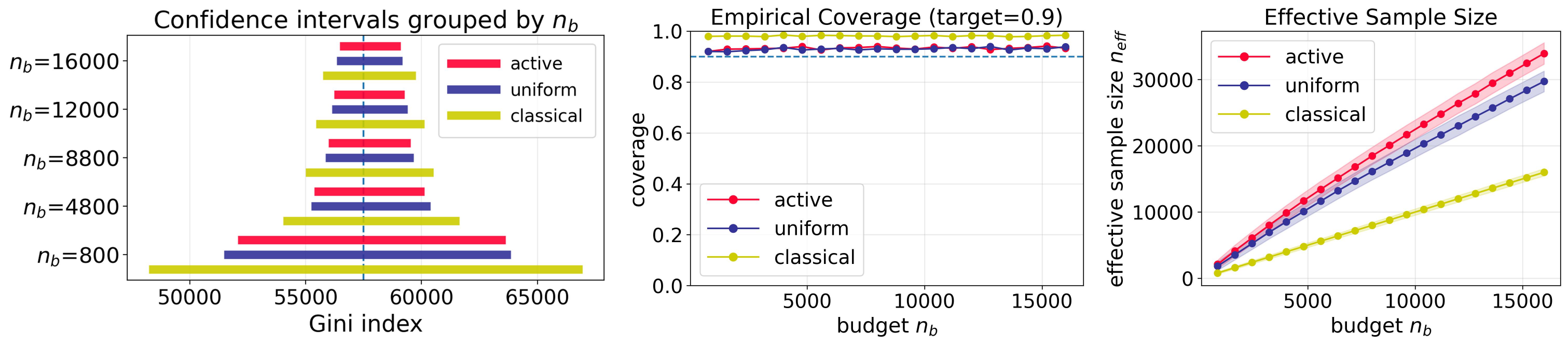}
    \caption{Income (ACS) dataset. Estimation of the Gini index as a measure of population income inequality. \textbf{Left}: 
90\% confidence intervals for the Gini index estimate at budgets $n_b\in\{800,4800,8800,12000,16000\}$. \textbf{Middle}: empirical coverage of the intervals (dashed line denotes the target coverage level 90\%). \textbf{Right}: effective sample size $n_{\rm eff}$,  with shaded $\pm 1$ standard-deviation bands.
    }
    \label{fig:gini}
\end{figure*}

Since the oracle sampling probability depends on the unknown $\theta^*$, we adopt a two-stage procedure:
\vspace{-10pt}
\begin{itemize}
    \item[(I)] \textbf{Pilot estimator}: 
    A small labeled subset is used to compute a pilot estimator $\theta^{\rm pilot}$, which serves as a plug-in surrogate for $\theta^*$ when constructing sampling probabilities.
    \item[(II)] \textbf{Active ERM}: Based on a small amount of queried labels, we train a predictive model $S(\cdot)$ to predict ${\rm tr}\big((g(\theta^{\rm pilot};Z_i)-g^\mu(\theta^{\rm pilot};\hat{Z}_i))(g(\theta^{\rm pilot};Z_i)-g^\mu(\theta^{\rm pilot};\hat{Z}_i))^\top[\nabla^2 {L}_n (\theta^{\rm pilot})]^{-2}\big)$ based on $X_i$. Then the sampling probability $\hat\pi(X_i)=n_b(\sqrt{S(X_i)}/\sum_{j=1}^n \sqrt{S(X_j)})$.
    Through the sampling probability, we minimize the active loss function in \eqref{eq:act_ERM} to obtain the final estimator.
\end{itemize}
\vspace{-10pt}
Under regularity conditions, we can verify the asymptotic normality of the active $U$-estimator.
\begin{theorem}[CLT for Active $U$-estimation]\label{thm:CLT U-estimator}
Under Assumptions \ref{assum:act_U} and \ref{assum:decom_U_est}, we have 
$
\sqrt{n}\,(\hat{\theta}^{\hat\pi}_{\rm act}-\theta^*)
\;\xrightarrow{d}\;
\mathcal{N}(0,\Sigma_*),
$
where
$
\Sigma_*
=r^2[\nabla^2 L_0 (\theta^*)]^{-1}\Sigma_g^\pi  [\nabla^2 L_0 (\theta^*)]^{-1}.
$
and $\Sigma_g^\pi$ is the covariance matrix of $\phi(\theta^*;Z_i,\hat{Z}_i,\xi^*_i,\pi_i)$ and $\xi^{*} \sim \operatorname{Ber}\!\bigl(\pi^{*}(X)\bigr)$.
\end{theorem}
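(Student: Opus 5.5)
We prove Theorem~\ref{thm:CLT U-estimator} with the standard Z-estimation route. The first step is an asymptotic linear expansion of the gradient of the active $U$-risk at $\theta^*$. The second step uses Assumption~\ref{assum:decom_U_est} to turn this expansion into one for $\hat\theta^{\hat\pi}_{\rm act}$. The third step handles the data-dependent sampling rule $\hat\pi$ through the same common-random-number coupling that underlies Theorem~\ref{thm:CLT}.

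\textbf{Step 1 (Hoeffding-type expansion at fixed $\theta^*$).}
For fixed $\theta$, the vector $\nabla L^{\rm act}_{n,\pi}(\theta)$ is an AIPW $U$-statistic of the form \eqref{eq:AIPW_pi}, applied coordinatewise to the kernel $\nabla\ell(\theta;\cdot)$. Repeating the decomposition \eqref{eq:Hoffed_decomp_AIPW} for each coordinate with a fixed (oracle) rule $\pi^*$ gives
\[
\nabla L^{\rm act}_{n,\pi^*}(\theta^*)=\frac{r}{n}\sum_{i=1}^n\phi(\theta^*;Z_i,\hat Z_i,\xi_i^*,\pi^*_i)+R_n.
\]
Here $\E[\phi]=0$ because $\nabla L_0(\theta^*)=0$. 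The remainder $R_n$ consists of degenerate IPW $U$-statistics. Their variances are of order $O(n^{-2})$ times moments of the inverse weights, and these moments are bounded because the trimmed probabilities satisfy $\pi\ge (1-\tau)n_b/n$ with $n_b/n$ bounded away from zero, as in Assumption~\ref{assum:act_U}. Hence $R_n=o_p(n^{-1/2})$.

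\textbf{Step 2 (replacing $\pi^*$ by $\hat\pi$).}
The rule $\hat\pi$ is built from $\theta^{\rm pilot}$ and the fitted model $S(\cdot)$, both of which use an independent pilot split. Conditional on that split, $\hat\pi$ is a fixed function. We couple the two designs by setting $\xi_i=\mathbb{I}\{U_i\le\hat\pi_i\}$ and $\xi^*_i=\mathbb{I}\{U_i\le\pi^*_i\}$ with shared uniforms $U_i$. We then bound the $L^2$ norm of
\[
\frac{r}{n}\sum_{i=1}^n(g_i-g^\mu_i)\Bigl(\frac{\xi_i}{\hat\pi_i}-\frac{\xi^*_i}{\pi^*_i}\Bigr).
\]
Conditionally on the pilot split this is a sum of independent mean-zero terms. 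Each term has second moment controlled by $\E|\xi_i-\xi_i^*|\le|\hat\pi_i-\pi^*_i|$ and by $|1/\hat\pi_i-1/\pi^*_i|$. Consistency $\sup_x|\hat\pi-\pi^*|\to_p0$, together with the lower bound on the probabilities, then makes the difference $o_p(n^{-1/2})$. The higher-order remainder terms change similarly, by $o_p(n^{-1/2})$.

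\textbf{Step 3 (from gradient to estimator).}
Assumption~\ref{assum:decom_U_est} provides the standard conditions: a unique minimizer, a positive definite Hessian $\nabla^2L_0(\theta^*)$, and Lipschitz or stochastic-equicontinuity control of the gradient kernel. Under these, $\sup_{\theta\in\Theta}|L^{\rm act}_{n,\hat\pi}(\theta)-L_0(\theta)|\to_p0$, because it is a uniform law of large numbers for (IPW-weighted) $U$-processes with bounded weights. This gives consistency of $\hat\theta^{\hat\pi}_{\rm act}$. A Taylor expansion of the first-order condition $0=\nabla L^{\rm act}_{n,\hat\pi}(\hat\theta)$ around $\theta^*$ then gives
\[
\hat\theta-\theta^*=-[\nabla^2L_0(\theta^*)]^{-1}\nabla L^{\rm act}_{n,\hat\pi}(\theta^*)+o_p(n^{-1/2}),
\]
provided the Hessian of the active risk converges uniformly near $\theta^*$; this is again an IPW $U$-statistic LLN. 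Combining this with Steps 1 and 2 and applying the multivariate Lindeberg CLT to the i.i.d.\ vectors $\phi(\theta^*;Z_i,\hat Z_i,\xi^*_i,\pi^*_i)$ yields $\mathcal N(0,\Sigma_*)$, since the covariance of $r\,\phi$ is $r^2\Sigma_g^\pi$.

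\textbf{Main obstacle.}
The hardest step is Step 3's stochastic equicontinuity: we need $\nabla L^{\rm act}_{n,\hat\pi}(\theta)-\nabla L^{\rm act}_{n,\hat\pi}(\theta^*)-[\nabla L_0(\theta)-\nabla L_0(\theta^*)]=o_p(n^{-1/2}+\|\theta-\theta^*\|)$ uniformly near $\theta^*$. The products of inverse weights make the IPW $U$-process heavier tailed than an ordinary $U$-process. I would first apply the Hoeffding decomposition uniformly in $\theta$. The linear part is handled by a Donsker or bracketing argument, using the Lipschitz-in-$\theta$ condition and the bounded weights. The degenerate parts are handled with maximal inequalities for degenerate $U$-processes, for example a moment bound over a finite grid combined with Lipschitz interpolation.
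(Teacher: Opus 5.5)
Your proposal is correct. Its central device is the same as the paper's: the common-random-number coupling that swaps $\hat\pi$ for $\pi^*$, followed by Slutsky and a CLT for the i.i.d.\ vectors $\phi(\theta^*;Z_i,\hat Z_i,\xi^*_i,\pi^*_i)$. The argument is organized differently, though. The paper asserts that the linear representation \eqref{eq:act_U_est} holds for every sampling rule, including the data-dependent $\hat\pi$. It then subtracts the representations of $\hat\theta^{\hat\pi}_{\rm act}$ and the coupled oracle estimator $\hat\theta^{\pi^*}_{\rm act}$, and applies Lemma~\ref{lem:HT-plugin-correct} to $\frac{r}{n}\sum_i\omega_i(\hat\xi_i/\hat\pi_i-\xi^*_i/\pi^*_i)$ with $\omega_i=g(\theta^*;Z_i)-g^\mu(\theta^*;\hat Z_i)$.

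You instead do the swap on the score $\nabla L^{\rm act}_{n,\pi}(\theta^*)$ at the fixed point $\theta^*$, where it is a clean Horvitz--Thompson functional. You then derive the expansion of $\hat\theta^{\hat\pi}_{\rm act}$ directly by a Z-estimation argument: consistency, stochastic equicontinuity of the IPW gradient $U$-process, and nonsingular $\nabla^2L_0(\theta^*)$.

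What each route buys:
\begin{itemize}
\item The paper's route is much shorter, but it leaves unproved exactly the step you work out: that the expansion survives a random, covariate-dependent sampling rule.
\item Your route also shows that the coupled difference is $O_p(\sqrt{\Delta_n/n})$. So for the swap, uniform consistency $\Delta_n=o_p(1)$ suffices, rather than the $O_p(n^{-1/2})$ rate in (H2).
\item The price is $U$-process machinery that Assumption~\ref{assum:decom_U_est} does not fully supply. There is no envelope condition for your uniform LLN. Also, (J5) gives only a Lipschitz gradient, so use the stochastic-equicontinuity formulation from your last paragraph, not convergence of the ``Hessian of the active risk''.
\end{itemize}

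One technical correction: $\hat\pi(X_i)=n_b\sqrt{S(X_i)}/\sum_j\sqrt{S(X_j)}$ depends on all covariates, so it is not a fixed function given the pilot split. Instead, condition on $(X_{1:n},Y_{1:n})$ and the pilot data, so that independence and zero mean come only from the uniforms $U_i$, as in Lemma~\ref{lem:HT-plugin-correct}.
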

Overall, this strategy highlights a key advantage of our approach: by exploiting the $U$-risks structure, labels are actively queried for efficient $U$-statistics-based ERM.
We provide experiment evaluation in Appendix \ref{appen_sub_sec:U-estimator}.

\section{Experiments}
We evaluate the performance of our proposed active inference method using real-world datasets. On each dataset, we compare with the following intuitive benchmark methods: 
\vspace{-10pt}
\begin{itemize}
    \item \textbf{classical}: The IPW $U$-statistic with uniform sampling, $U_{\mathrm{IPW}}^{\pi}(\mathcal{Y}_n)$ in \eqref{eq:ipw} with $\pi(x)\equiv n_b/n$. To stabilize it, we replace the denominator $\binom{n}{r}$ with $\hat{N}$ in \eqref{eq:Est_N}.
    
    \vspace{-5pt}
    
  \item \textbf{uniform}: The AIPW $U$-statistic $U_{\mathrm{AIPW}}^{\pi}(\mathcal{Y}_n)$ in \eqref{eq:VR-AIPW} with uniform sampling, i.e., $\pi(x)\equiv{n_b}/{n}$.

  \vspace{-5pt}
  
  \item \textbf{active}: The proposed active $U$-statistic in Algorithm \ref{alg:main}.
\end{itemize}
\vspace{-10pt}
Each dataset uses a different base predictive
model $f$, specified in the experiment. For each budget $n_b$, we report 90\% confidence intervals for the target parameter, averaged over multiple replicates, to summarize estimation accuracy and uncertainty.
We also report empirical coverage rates for the confidence intervals to assess the inferential validation.


We further compare methods using \textit{effective sample size}. For an estimator run with labeling budget $n_b$, its effective sample size $n_{\rm eff}$ is defined as the budget at which the {classical} estimator attains the same variance. Equivalently, an estimator has effective sample size $n_{\rm eff}$ if its variance under budget $n_b$ matches the variance of the \textbf{classical} estimator with budget $n_{\rm eff}$. 
A larger $n_{\rm eff}$ indicates a more efficient estimator, while $n_{\mathrm{eff}}<n_b$ implies that the estimator performs worse than the classical baseline.
We report one standard deviation band around the effective
sample size in all plots.


\subsection{Income dataset (ACS)}

We consider 
the \textbf{American Community Survey (ACS)} data \cite{ding2021retiring} released by the U.S.\ Census Bureau, which consists of rich demographic and socioeconomic covariates, including age, education, marital status, employment, and occupation indicators. 
Each observation corresponds to an individual, and the label $Y$ represents the income measure. 
For this dataset, both the prediction model $\mu(X)$ and the proxy score model $V(x)$ are implemented using XGBoost: $\mu(X)$ is trained to predict the income label $Y$ from the demographic and socioeconomic covariates, while $V(x)$ is learned to construct the active sampling probabilities.

\paragraph{Target parameter: Gini index}
Beyond point prediction of individual income, a central policy-relevant goal is to quantify income inequality in the population. The Gini index \citep{yitzhaki2003gini} is a standard measure of imbalance in the income distribution, defined as 
$$
\theta^* \;=\; \mathbb{E}\!\left[\,|Y_1-Y_2|\,\right],
$$
which corresponds to a $U$-statistic with kernel $h(y_1,y_2)=|y_1-y_2|$. 
Estimating this parameter with uncertainty quantification is crucial for downstream socioeconomic analysis. In practice, reliable income labels are expensive and difficult to obtain, often requiring additional survey or linkage to administrative records under strict access control. 
This motivates the need for active label acquisition.

Figure~\ref{fig:gini} compares the proposed active sampling method with the {classical} and {uniform} baselines. All methods achieve near-nominal coverage, while our {active} approach yields the narrowest confidence intervals. Across all budgets, {active} sampling attains a markedly larger effective sample size, corresponding to roughly a 60\% labeling-budget reduction relative to {classical} one. It demonstrates the higher efficiency of our approach.


\begin{figure*}
    \centering
    \includegraphics[width=0.9\linewidth]{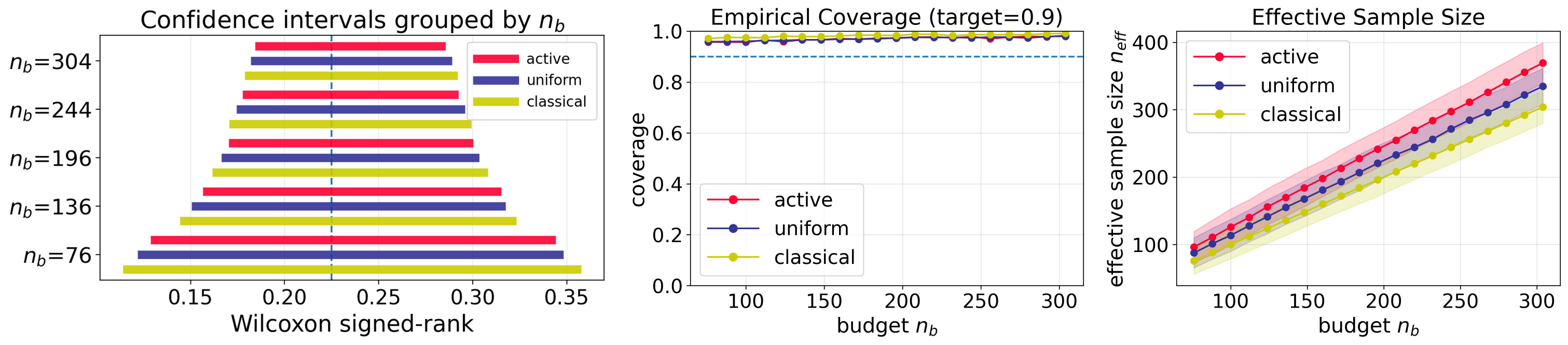}
    \caption{Perioperative dataset (VitalDB). Estimation of the Wilcoxon signed-rank test statistic to assess whether hemoglobin exhibits a systematic shift at anesthesia induction. \textbf{Left}: 
90\% confidence intervals for the estimate at budgets $n_b\in\{76,136,196,244,304\}$. \textbf{Middle}: empirical coverage of the intervals. \textbf{Right}: effective sample size $n_{\rm eff}$, with shaded $\pm 1$ standard-deviation bands.
    }
    \label{fig:wilcoxon}
\end{figure*}




\subsection{Perioperative dataset (VitalDB)}
We further validate our approach on \textbf{VitalDB} \citep{lee2022vitaldb}, a perioperative dataset with records from 6,388 surgical cases, including case-level covariates (e.g., demographics and preoperative assessments) and time-stamped lab measurements (e.g., lactate, hemoglobin, blood gases).

We take the anesthesia start time $t_0$ as the index event and focus on hemoglobin. For each case, within a $\pm 24$ hour window around $t_0$, we define $Y^{a}$ as the most recent pre-event hemoglobin measurement ($t_0-24\mathrm{h}<\texttt{dt}<t_0$) and $Y^b$ as the earliest post-event measurement ($t_0<\texttt{dt}<t_0+24\mathrm{h}$). The outcome of interest is the paired difference $D=Y^{a}-Y^{b}$. We will retain cases only for which both measurements are available. For this dataset, both the prediction model $\mu(X)$ and the proxy score model $V(x)$ are implemented using XGBoost: $\mu(X)$ is trained to predict the paired difference $D$ from case-level covariates, while $V(x)$ is learned to construct the active sampling probabilities.


\paragraph{Target $U$-statistic: Wilcoxon signed-rank}
Accessing whether hemoglobin exhibits a \emph{systematic shift} around anesthesia induction is important for monitoring perioperative management. The Wilcoxon signed-rank test  \citep{wilcoxon1945individual}
is well suited to this task: it provides a rank-based summary of directional change while reducing sensitivity to extreme or noisy measurements.




The Wilcoxon signed-rank test is asymptotically equivalent to the test based on the $U$-statistic
\[
U_n^{(1)} \;=\; \binom{n}{2}^{-1}\sum_{1\le i<j\le n}\I(D_i+D_j>0).
\]
This $U$-statistic is an unbiased estimator of the parameter
\[
\theta^{*} \;=\; \mathbb P(D_1+D_2>0).
\]
\vspace{-20pt}

Although VitalDB is covariate-rich, constructing $D$ requires two hemoglobin measurements tightly localized around $t_0$, so only a subset of cases yield valid pairs. This creates a label-limited setting that motivates active acquisition.

Figure~\ref{fig:wilcoxon} reports the confidence interval, empirical coverage and effective sample size, respectively. Across budgets, active sampling consistently achieves shorter intervals and larger $n_{\rm eff}$ than {classical} and {uniform} baselines while maintaining near-nominal coverage. Specifically, it achieves comparable precision with about 20\% fewer labels than {classical} baseline and 10\% fewer than uniform sampling.

\subsection{Political bias dataset}

\begin{figure*}
    \centering
    \includegraphics[width=0.9\linewidth]{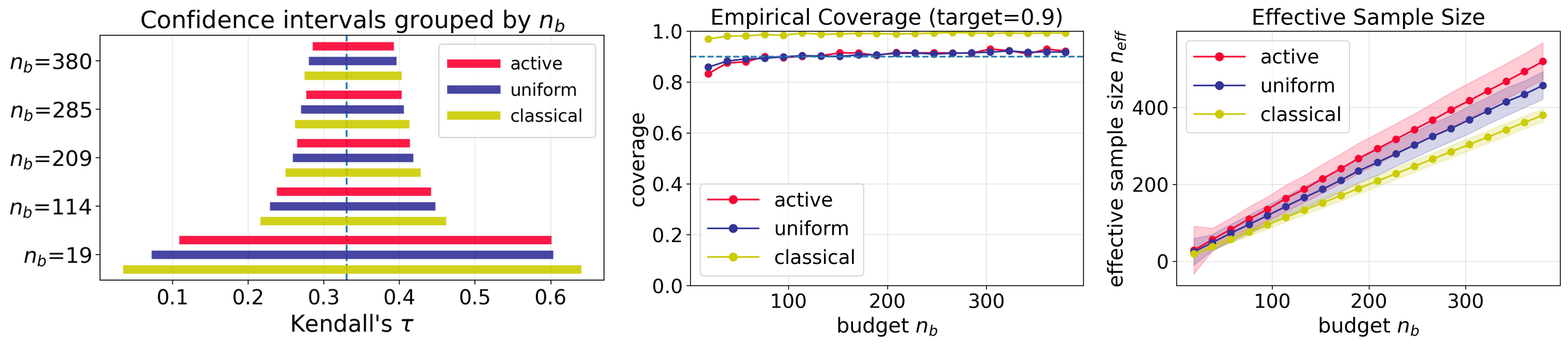}
    \caption{Political bias dataset. Estimation of Kendall's $\tau$ to detect whether LLM-based labels distort the underlying ordinal structure and thereby bias downstream audits. \textbf{Left}: 
90\% confidence intervals for  estimate at budgets $n_b\in\{19,114,209,285,380\}$. \textbf{Middle}: empirical coverage of the  intervals. \textbf{Right}: effective sample size $n_{\rm eff}$, with shaded $\pm 1$ standard-deviation bands.
    }
    \label{fig:Kendall}
\end{figure*}

We further evaluate our method on a \textbf{political-bias} task using online news articles annotated for {political leaning}. Following the setup in \citet{gligoric2025can,Li2025Robust}, each sample consists of article text and metadata (topic, source, title, date, URL) with a ground-truth label
\[
Y^{\rm TR} \in \{\texttt{left},\texttt{center},\texttt{right}\}
\]
We also obtain LLM-based proxy labels $Y^{\rm LLM}$  and confidence scores, reflecting a label-limited setting where high-quality annotations are costly.

\paragraph{Target parameter: Kendall's $\tau$}
Our goal is to assess whether LLM predictions preserve the relative ideological ordering of articles.
To this end, we measure the agreement between the true label $Y^{\rm TR}$ and the LLM prediction $Y^{\rm LLM}$ from GPT-3.5 using Kendall's $\tau$ coefficient \citep{kendall1948rank}, a classical second-order $U$-statistic. After encoding the ordered categories as an ordinal score (e.g., $\texttt{left} < \texttt{center} < \texttt{right}$), the target parameter can be written as
\[
\theta^* \;=\; \mathbb{E}\!\left[\mathrm{sign}\!\Big((Y_1^{\rm TR}-Y_2^{\rm TR})\,(Y_1^{\rm LLM}-Y_2^{\rm LLM})\Big)\right].
\]
Near-zero $\theta^*$ indicates frequent pairwise reversals, implying that LLM-based labels may distort ordinal structure and mislead downstream audits.


Figure~\ref{fig:Kendall} reports confidence intervals, empirical coverage, and effective sample size for the Kendall's $\tau$ task. The proposed active inference procedure remains well performed under this target. Across all labeling budgets, active sampling yields shorter intervals and larger $n_{\rm eff}$ than the comparing benchmarks while maintaining near-nominal coverage. It requires approximately 20\% less budget than {classical} baseline to achieve the same inferential accuracy, and about 10\% less budget than {uniform} sampling. The estimated $\theta^*$ is moderately positive (about $0.3–0.4$), suggesting the LLM captures coarse ideological ordering but still misorders a nontrivial fraction of article pairs.


\section{Discussion}

We point out several promising future directions. First, our current implementation leverages predictions from all unlabeled samples to construct augmented estimators, which may become computationally demanding in large-scale settings. An important extension is to design additional sampling or sketching strategies over the unlabeled data to reduce computational cost without sacrificing statistical efficiency, or to develop distributed algorithms for scalable estimation. Second, while our theoretical analysis focuses on asymptotic validity and efficiency, obtaining formal finite-sample guarantees for active $U$-statistics with learned sampling policies remains an important direction. Third, our analysis focuses on non-degenerate $U$-statistics, where the first-order projection governs the asymptotic behavior. When the first-order projection vanishes, higher-order components dominate, and while our framework can still be applied with suboptimal policies, identifying optimal sampling rules in this degenerate regime remains an open and challenging problem.

\section*{Acknowledgements}
We thank anonymous area chair and reviewers for their helpful comments. Changliang Zou is supported  by the National Key R\&D Program of China (Grant No. 2022YFA1003800) and the National Natural Science Foundation of China (Grant Nos. 12231011); Liuhua Peng is supported by ARC (Grant No. LP240100101); Yuyang Huo is supported by the National Natural Science Foundation
of China (No. 124B2016).


\section*{Impact Statement}

This paper presents work whose goal is to advance the field of Machine
Learning. There are many potential societal consequences of our work, none
which we feel must be specifically highlighted here.


\bibliography{references}
\bibliographystyle{icml2026}

\newpage
\appendix
\onecolumn
\section{Technical proofs}
\subsection{Assumptions}\label{appen_sub_sec:assumptions}
The following assumption on the sampling probability is used for establishing the asymptotic normality of both the active $U$-statistic and the active $U$-estimator.
\begin{assumption}[Regularity conditions for sampling probability]\label{assum:act_U}
\quad \vspace{-5pt}
    \begin{enumerate}
\item[(H1)] (\emph{Uniform positivity of sampling probability}) There exists a constant $\pi_{\min}>0$ such that
      $\pi^*(X)\ge \pi_{\min}$ a.s., and
      $\mathbb P\!\left(\inf_{1\le i\le n}\hat\pi(X_i)\ge \pi_{\min}/2\right)\to1$.
\item[(H2)] (\emph{Sampling probability learning consistency})  $\Delta_n:=\sup_{1\le i\le n}|\hat\pi(X_i)-\pi^*(X_i)|=O_p(n^{-1/2}).$
\end{enumerate}
\end{assumption}

(H1) in Assumption \ref{assum:act_U}  ensures that no observation is sampled with vanishing probability. Such lower-bound conditions are standard in active inference and importance-sampling–based estimation, as they prevent excessive variance inflation \citep{chenbalanced}. In practice, (H1) can be enforced deterministically via the proposed trimming procedure, under which $\pi_{\min}=(1-\tau)n_b/n$. (H2) requires that the estimated sampling probability $\hat\pi$ uniformly approximates the optimal policy $\pi^*$. This condition is satisfied whenever the underlying $V(X)$ used to construct $\pi^*$ is estimated with sufficient accuracy. Similar uniform consistency assumptions are commonly adopted in the active inference literature to facilitate asymptotic analysis of active sampling procedures \citep{Tijana2024Active,chenbalanced}. 

The next series of assumptions is used for the decomposition of the $U$-estimator in \eqref{eq:decomp_ori_U_est} and verifying its asymptotic normality.
These conditions are standard in the related literature \citep{Wang2009MTSE,chen2023distributed}.

\begin{assumption}[Regularity conditions for $U$-estimation]\label{assum:decom_U_est}
\quad
\vspace{-5pt}
\begin{enumerate}
\item[(J1)] (\emph{Non-degeneracy}) $\E\big[\ell(\theta;Z_1,\ldots,Z_r)\big]^2<\infty$ and $\E\big[\big\|\nabla_\theta \ell(\theta^*;Z_1,\ldots,Z_r)\big\|^2\big]<\infty$ for all $\theta\in\Theta$.
\item[(J2)] (\emph{Continuity and compactness}) The map
$\theta\mapsto \ell(\theta;z_1,\ldots,z_r)$ is continuous on $\Theta$ and $\Theta$ is compact.
\item[(J3)] (\emph{Positive definite  covariance matrix}) $H_0:=\nabla^2 L_0(\theta^*)$ is nonsingular and $\Sigma_g:=\Cov\big(g(\theta^*;Z_1)\big)$ is positive definite.
\item[(J4)] (\emph{Identification}) The population risk $L_0(\theta)$ admits a unique minimizer
$\theta^*\in\Theta$.
\item[(J5)] (\emph{Local Lipschitz of the gradient}) There exist $\delta>0$ and a nonnegative random variable $L=L(Z_1,\ldots,Z_r)$ such that
$\E[L^2]<\infty$ and, for all $\theta,\theta'\in\Theta$ with
$\|\theta-\theta^*\|\le \delta$ and $\|\theta'-\theta^*\|\le \delta$,
\[
\big\|\nabla_\theta \ell(\theta;Z_1,\ldots,Z_r)-\nabla_\theta \ell(\theta';Z_1,\ldots,Z_r)\big\|
\ \le\ L\,\|\theta-\theta'\|\quad\text{a.s.}.
\]

\end{enumerate}
\end{assumption}

\subsection{Useful Lemmas}
\begin{lemma}\label{lem:HT-plugin-correct}
Let $\pi^*(x)\in(0,1)$ be the target sampling probability, and let $\hat\pi(x)$ be
a plug-in estimator. Assume a common-random-number (CRN) coupling:
$U_i\stackrel{iid}{\sim}{\rm Unif}(0,1)$ independent of $(X_i,Y_i,\hat Y_i)$ and
\[
\xi_i(\pi):=\mathbf 1\{U_i\le \pi(X_i)\},\qquad 
\xi_i^*:=\xi_i(\pi^*),\quad \hat\xi_i:=\xi_i(\hat\pi).
\]
Denote $\omega_i$ be a measurable function w.r.t. $(X_i,Y_i,\hat Y_i)$ and satisfies $\mathbb E[\omega_1^2]<\infty$.
Define the HT-type functional
\[
T_n(\pi):=\frac1n\sum_{i=1}^n \omega_i\,\frac{\xi_i(\pi)}{\pi(X_i)}.
\]
Suppose Assumption \ref{assum:act_U} holds, then
\[
\sqrt n\bigl(T_n(\hat\pi)-T_n(\pi^*)\bigr)\xrightarrow{p}0,
\quad\text{and in fact}\quad
T_n(\hat\pi)-T_n(\pi^*)=O_p(n^{-3/4}).
\]
\end{lemma}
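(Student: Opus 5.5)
The plan is to avoid bounding the two Horvitz--Thompson pieces separately. A crude bound such as $|1/\hat\pi_i-1/\pi^*_i|\le 2\Delta_n/\pi_{\min}^2$ only gives $O_p(n^{-1/2})$. Instead I will use that both $T_n(\hat\pi)$ and $T_n(\pi^*)$ are conditionally unbiased for the same quantity $n^{-1}\sum_i\omega_i$, and that the CRN coupling makes their difference have a very small conditional second moment.

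\textbf{Setup and conditioning.} Let $\mathcal D$ denote the $\sigma$-field generated by $\{(X_i,Y_i,\hat Y_i)\}_{i=1}^n$ together with whatever is used to build $\hat\pi$. The latter is either the covariates alone or an independent pilot/sample-split labeled set, as in Appendix~\ref{appen_sub_sec:est_Vx}. The key structural point is that the uniforms $U_1,\dots,U_n$ are i.i.d.\ and independent of $\mathcal D$, while $\hat\pi(X_i)$ and $\pi^*(X_i)$ are $\mathcal D$-measurable. On the event $E_n=\{\inf_i\hat\pi(X_i)\ge\pi_{\min}/2\}$, which has probability tending to one by (H1), all weights are bounded by $2/\pi_{\min}$. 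It therefore suffices to prove the bound on $E_n$, since $\mathbb P(E_n^c)\to0$.

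\textbf{Conditional mean and variance.} Write $D_n:=T_n(\hat\pi)-T_n(\pi^*)=n^{-1}\sum_i\omega_i\zeta_i$ with $\zeta_i:=\hat\xi_i/\hat\pi(X_i)-\xi_i^*/\pi^*(X_i)$.

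For any $\mathcal D$-measurable $\pi$, we have $\mathbb E[\xi_i(\pi)/\pi(X_i)\mid\mathcal D]=1$. Hence $\mathbb E[\zeta_i\mid\mathcal D]=0$, and the $\zeta_i$ are conditionally independent across $i$.

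Under the CRN coupling, $\mathbb E[\hat\xi_i\xi_i^*\mid\mathcal D]=\min\{\hat\pi_i,\pi^*_i\}$. A two-line computation then gives
\[
\mathbb E[\zeta_i^2\mid\mathcal D]=\frac1{\hat\pi_i}+\frac1{\pi^*_i}-\frac{2\min\{\hat\pi_i,\pi^*_i\}}{\hat\pi_i\pi^*_i}=\frac{|\hat\pi_i-\pi^*_i|}{\hat\pi_i\,\pi^*_i}\le \frac{2\Delta_n}{\pi_{\min}^2}\quad\text{on }E_n .
\]
Consequently
\[
\mathrm{Var}(D_n\mid\mathcal D)\le \frac{2\Delta_n}{\pi_{\min}^2}\cdot\frac1{n^2}\sum_{i=1}^n\omega_i^2 .
\]
Since $\mathbb E[\omega_1^2]<\infty$, the law of large numbers gives $n^{-1}\sum_i\omega_i^2=O_p(1)$. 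Combined with (H2), $\Delta_n=O_p(n^{-1/2})$, this yields $\mathrm{Var}(D_n\mid\mathcal D)=O_p(n^{-3/2})$.

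\textbf{Conclusion and main obstacle.} A conditional Chebyshev inequality finishes the argument. For $M>0$,
\[
\mathbb P\bigl(|D_n|>Mn^{-3/4}\mid\mathcal D\bigr)\le n^{3/2}\,\mathrm{Var}(D_n\mid\mathcal D)/M^2 ,
\]
and the right-hand side is $O_p(1)/M^2$. Taking expectations via bounded convergence, together with a truncation on the event that $n^{3/2}\mathrm{Var}(D_n\mid\mathcal D)$ is at most a large constant, shows $D_n=O_p(n^{-3/4})$. In particular $\sqrt n D_n\xrightarrow{p}0$.

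The step I expect to require the most care is the conditioning. The argument needs $\hat\pi$ to be independent of the sampling uniforms $U_i$, and in the stated generality $\hat\pi$ is data-driven. I will justify this through the sample-splitting construction, under which $\hat\pi$ depends only on covariates and on independent pilot labels, and never on the $\xi_i$ being analyzed. If $\hat\pi$ were allowed to depend on the $U_i$, the conditional-mean-zero property would fail, and one would need a uniform empirical-process bound over a shrinking class of policies instead.
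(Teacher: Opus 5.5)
Your proposal is correct and follows the paper's proof essentially step for step. It uses the same difference of HT weights, the same CRN computation $\mathbb{E}[\zeta_i^2\mid\mathcal D]=|\hat\pi_i-\pi^*_i|/(\hat\pi_i\pi^*_i)\le 2\Delta_n/\pi_{\min}^2$ on the positivity event, conditional mean zero and independence, and conditional Chebyshev to get $O_p(\sqrt{\Delta_n/n})=O_p(n^{-3/4})$. Your explicit requirement that $\hat\pi$ be independent of the uniforms $U_i$ (e.g., via sample splitting) spells out an assumption the paper uses only implicitly when it conditions on $\hat\pi$.
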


\begin{proof}
Write $\pi_i^*=\pi^*(X_i)$ and $\hat\pi_i=\hat\pi(X_i)$.
Define
\[
D_i:=\frac{\hat\xi_i}{\hat\pi_i}-\frac{\xi_i^*}{\pi_i^*},
\qquad\text{so that}\qquad
T_n(\hat\pi)-T_n(\pi^*)=\frac1n\sum_{i=1}^n \omega_iD_i.
\]
Work on the event $\mathcal E_n=\{\inf_i\hat\pi_i\ge \pi_{\min}/2\}$, with
$\mathbb P(\mathcal E_n)\to1$ by (H1).

For any $\pi\in(0,1)$,
$\mathbb E[\xi_i(\pi)\mid X_i]=\pi(X_i)$, hence
$\mathbb E[\xi_i(\pi)/\pi(X_i)\mid X_i]=1$.
Therefore,
$
\mathbb E[D_i\mid X_i,\hat\pi]=0.
$
Let $p=\pi_i^*$ and $q=\hat\pi_i$.
Under the CRN construction $\xi=\mathbf 1\{U\le \cdot\}$, a direct
interval-splitting calculation gives
\[
\mathbb E[D_i^2\mid X_i,\hat\pi]=\frac{|q-p|}{pq}.
\]
On $\mathcal E_n$, $p\ge\pi_{\min}$ and $q\ge\pi_{\min}/2$, so
\[
\mathbb E[D_i^2\mid X_i,\hat\pi]
\le \frac{2}{\pi_{\min}^2}\,|\hat\pi_i-\pi_i^*|
\le \frac{2}{\pi_{\min}^2}\,\Delta_n.
\]
Conditional on $(X_{1:n},\hat\pi,Z_{1:n})$, the variables $\{D_i\}$ have conditional mean zero and are independent, since they are functions of independent $\{U_i\}$.
Therefore,
\[
\mathrm{Var}\!\left(\frac1n\sum_{i=1}^n \omega_iD_i \Bigm| X_{1:n},\hat\pi,\omega_{1:n}\right)
=
\frac{1}{n^2}\sum_{i=1}^n \omega_i^2\,\mathbb E[D_i^2\mid \omega_i,\hat\pi]
\le
C\,\Delta_n\cdot \frac{1}{n^2}\sum_{i=1}^n \omega_i^2,
\]
for $C=2/\pi_{\min}^2$ on $\mathcal E_n$.
Since $\mathbb E[\omega_1^2]<\infty$, we have $\frac1n\sum \omega_i^2=O_p(1)$, hence
\[
\mathrm{Var}\!\left(\frac1n\sum_{i=1}^n \omega_iD_i \Bigm| X_{1:n},\hat\pi,\omega_{1:n}\right)
=O_p\!\left(\frac{\Delta_n}{n}\right).
\]
By Chebyshev's inequality, this implies
\[
\frac1n\sum_{i=1}^n \omega_iD_i
=
O_p\!\left(\sqrt{\frac{\Delta_n}{n}}\right).
\]
Using $\Delta_n=O_p(n^{-1/2})$ from (H2), we obtain
\[
T_n(\hat\pi)-T_n(\pi^*)=O_p(n^{-3/4}),
\qquad
\sqrt n\bigl(T_n(\hat\pi)-T_n(\pi^*)\bigr)=O_p(n^{-1/4})\xrightarrow{p} 0.
\]
\end{proof}

\begin{lemma}\label{lem:variance-plugin-correct}
Let $\{(X_i,W_i,\xi_i)\}_{i=1}^n$ be i.i.d., where $W_i$ may collect $(Y_i,\hat Y_i)$. Let $\pi^*(x)\in(0,1)$ be the target sampling probability and assume
\[
\xi_i\mid X_i \sim \mathrm{Ber}\big(\pi^*(X_i)\big),
\qquad
\{\xi_i\}_{i=1}^n \text{ are conditionally independent given } \{X_i\}_{i=1}^n.
\]
Let $\hat\pi(x)$ be a plug-in estimator and denote
$\pi_i^*=\pi^*(X_i)$ and $\hat\pi_i=\hat\pi(X_i)$. 
Define
\[
T_n(\pi):=\frac1n\sum_{i=1}^n \xi_i\, \omega_i\, \eta\big(\pi(X_i)\big).
\]
Suppose Assumption \ref{assum:act_U} holds, $\mathbb E[\omega_1^2]<\infty$ and $\eta$ is Lipschitz on $[\pi_{\min}/2,1)$:
there exists $L_\eta<\infty$ such that
\[
|\eta(u)-\eta(v)|\le L_\eta |u-v|,\qquad \forall\,u,v\in[\pi_{\min}/2,1).
\]
Then
$T_n(\hat\pi)-T_n(\pi^*)=o_p(1).$

\end{lemma}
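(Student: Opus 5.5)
The plan is to note that in $T_n$ the indicators $\xi_i$ are held fixed, and only the weight $\eta(\pi(X_i))$ changes when $\pi^*$ is replaced by $\hat\pi$. We work on the event $\mathcal E_n=\{\inf_i\hat\pi_i\ge \pi_{\min}/2\}$, which has probability tending to one by (H1). On this event both $\hat\pi_i$ and $\pi_i^*$ lie in $[\pi_{\min}/2,1)$, using $\pi^*\ge\pi_{\min}$ and $\pi^*<1$. So the Lipschitz bound on $\eta$ applies pointwise to every index $i$.

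Concretely, on $\mathcal E_n$ we write
\[
|T_n(\hat\pi)-T_n(\pi^*)|\le \frac1n\sum_{i=1}^n \xi_i|\omega_i|\,|\eta(\hat\pi_i)-\eta(\pi_i^*)|\le L_\eta\,\Delta_n\cdot\frac1n\sum_{i=1}^n|\omega_i|,
\]
where we use $\xi_i\in\{0,1\}$ and the definition $\Delta_n=\sup_i|\hat\pi_i-\pi_i^*|$. Since $\mathbb E|\omega_1|\le(\mathbb E\omega_1^2)^{1/2}<\infty$, the law of large numbers (or simply Markov's inequality) gives $n^{-1}\sum_i|\omega_i|=O_p(1)$. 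By (H2), $\Delta_n=O_p(n^{-1/2})$. The product is therefore $O_p(n^{-1/2})=o_p(1)$.

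To finish, we remove the restriction to $\mathcal E_n$. For any $\epsilon>0$,
\[
\mathbb P(|T_n(\hat\pi)-T_n(\pi^*)|>\epsilon)\le \mathbb P(\{|\cdot|>\epsilon\}\cap\mathcal E_n)+\mathbb P(\mathcal E_n^c),
\]
and both terms tend to zero. No independence between $\hat\pi$ and the $\xi_i$ is needed, because the bound is deterministic given $\Delta_n$ and the $\omega_i$.

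The main obstacle is only bookkeeping: checking that $\eta$ is evaluated inside its Lipschitz domain. This holds on $\mathcal E_n$ once $\hat\pi_i<1$. If $\hat\pi$ could equal $1$, we would extend $\eta$ by continuity, or note that trimming keeps it below one. In fact the argument gives the stronger rate $O_p(n^{-1/2})$, which is useful for consistency of the variance estimator in Theorem~\ref{thm:Consistency variance} with $\eta(u)=1/u$ or $\eta(u)=u^{-1}(u^{-1}-1)$, both of which are Lipschitz on $[\pi_{\min}/2,1)$.
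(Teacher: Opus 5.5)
Your proof is correct and follows essentially the same route as the paper's: restrict to the event $\mathcal E_n$ from (H1), bound the difference by $L_\eta\Delta_n\cdot n^{-1}\sum_i|\omega_i|$ (the paper keeps the factor $\xi_i$ in this average), control the average by the LLN, and conclude with (H2). Your explicit handling of $\mathcal E_n^c$, your check that $\hat\pi_i<1$ so that $\eta$ is evaluated inside its Lipschitz domain, and your $O_p(n^{-1/2})$ rate are minor refinements that the paper's proof leaves implicit.
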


\begin{proof}
Work on the event $\mathcal E_n=\{\inf_i\hat\pi_i\ge \pi_{\min}/2\}$ with
$\mathbb P(\mathcal E_n)\to 1$ by (H1). On $\mathcal E_n$, by (H2),
\[
|T_n(\hat\pi)-T_n(\pi^*)|
\le
\frac1n\sum_{i=1}^n \xi_i |\omega_i|\,
\big|\eta(\hat\pi_i)-\eta(\pi_i^*)\big|
\le
L_\eta \Delta_n \cdot \frac1n\sum_{i=1}^n \xi_i |Z_i|.
\]
By $\mathbb E[\omega_1^2]<\infty$ and $0\le \xi_i\le 1$, we have $\mathbb E[\xi_1|\omega_1|]\le \mathbb E[|\omega_1|]<\infty$,
hence $\frac1n\sum_{i=1}^n \xi_i |\omega_i|=O_p(1)$ by the LLN.
Since $\Delta_n=o_p(1)$ by (H2), it follows that
$|T_n(\hat\pi)-T_n(\pi^*)|=o_p(1)$ on $\mathcal E_n$, hence unconditionally.
\end{proof}
\subsection{Proof of Proposition \ref{prop:unbiased_aipw_u}}
\begin{proof}
As $\xi_i\sim\mathrm{Bern}(\pi(X_i))$, $\mathbb{E}[\xi_i\mid X_i]=\pi(X_i)=\pi_i$ for $i\in[n]$.
In addition, since $(\xi_{i_1},\ldots,\xi_{i_r})$ are independent given $(X_{i_1},\ldots,X_{i_r})$ for any $(i_1,\ldots,i_r)\in \mathcal{C}_{n,r}$, we have
\begin{align*}
    \mathbb{E}\left[\frac{\xi_{i_1}\cdots\xi_{i_r}}{\pi_{i_1}\cdots\pi_{i_r}} \mid X_{i_1},\ldots,X_{i_r}\right] = \frac{\mathbb{E}[\xi_{i_1}\mid X_{i_1}]\cdots\mathbb{E}[\xi_{i_r}\mid X_{i_r}]}{\pi_{i_1}\cdots\pi_{i_r}} = 1.
\end{align*}
Therefore, conditioning on $(X_1,\ldots,X_n;Y_1,\ldots,Y_n)$,
\begin{align*}
& \mathbb{E}\!\left[U^{\pi}_{\rm AIPW}\mid X_1,\ldots,X_n;Y_1,\ldots,Y_n\right] \\
= & \frac{1}{\binom{n}{r}}\sum_{\mathcal{C}_{n,r}} \mathbb{E}\!\left[h\big(\hat{Y}_{i_1}, \ldots, \hat{Y}_{i_r}\big)\mid X_1,\ldots,X_n;Y_1,\ldots,Y_n\right] +\frac{1}{\binom{n}{r}}\sum_{\mathcal{C}_{n,r}} \mathbb{E}\!\left[\Delta_h(i_1,\ldots,i_r)\frac{\xi_{i_1}\cdots\xi_{i_r}}{\pi_{i_1}\cdots\pi_{i_r}}\mid X_1,\ldots,X_n;Y_1,\ldots,Y_n\right] \\
= & \frac{1}{\binom{n}{r}}\sum_{\mathcal{C}_{n,r}} h\big(\hat{Y}_{i_1}, \ldots, \hat{Y}_{i_r}\big) + \frac{1}{\binom{n}{r}}\sum_{\mathcal{C}_{n,r}} \Delta_h(i_1,\ldots,i_r)\mathbb{E}\!\left[\frac{\xi_{i_1}\cdots\xi_{i_r}}{\pi_{i_1}\cdots\pi_{i_r}}\mid X_{i_1},\ldots,X_{i_r}\right] \\
= &
\frac{1}{\binom{n}{r}}\sum_{\mathcal{C}_{n,r}} h\big(\hat{Y}_{i_1}, \ldots, \hat{Y}_{i_r}\big) + \frac{1}{\binom{n}{r}}\sum_{\mathcal{C}_{n,r}} \Delta_h(i_1,\ldots,i_r) \\
= &
\frac{1}{\binom{n}{r}}\sum_{\mathcal{C}_{n,r}} h(Y_{i_1}, \ldots, Y_{i_r}).
\end{align*}

Taking expectation over $(X_1,\ldots,X_n;Y_1,\ldots,Y_n)$, we obtain
\[
\mathbb{E}\!\left[U^{\pi}_{\rm AIPW}\right]
=
\mathbb{E}[h(Y_1,\ldots,Y_r)]
=
\theta^*.
\]
\end{proof}

\subsection{Proof of Theorem \ref{thm:Optimal sampling probabilities}}
\begin{proof}
Under the conditions that $\{(X_i,Y_i)\}_{i=1}^{n}$ are i.i.d., $E[h(Y_1,\ldots,Y_r)]^2<\infty$, $E[h(\hat{Y}_1,\ldots,\hat{Y}_r)]^2<\infty$, $\Var(h_1(Y))>0$ and $\Var(h_1^{\mu}(\hat{Y}))>0$, the decomposition in \eqref{eq:Hoffed_decomp_AIPW} gives
\[
\Var\!\left(U^{\pi}_{\mathrm{AIPW}}\right)
=
C_n
+
\frac{r^2}{n}\,
\mathbb{E}\!\left[
s(X)\left(\frac{1}{\pi(X)}-1\right)
\right]
+o(n^{-1}),
\]
where $C_n$ does not depend on $\pi$. Thus, asymptotically minimizing
$\Var(U^{\pi}_{\mathrm{AIPW}})$ over $\pi$ satisfying $\mathbb{E}[\pi(X)]\le n_b/n$
is equivalent to minimizing
\[
\mathbb{E}\!\left[\frac{s(X)}{\pi(X)}\right]
\quad \text{subject to}\quad
0\le \pi(X)\le 1,\ \ \mathbb{E}[\pi(X)]\le \frac{n_b}{n},
\]
since $\mathbb{E}[s(X)]$ is constant in $\pi$.

Because $s(X)/\pi(X)$ is pointwise decreasing in $\pi(X)$, any minimizer must exhaust the labeling budget, and we may therefore restrict attention to policies satisfying $\mathbb{E}[\pi(X)]=n_b/n$.
By the Cauchy--Schwarz inequality,
\[
\mathbb{E}\!\left[\frac{s(X)}{\pi(X)}\right]
=
\mathbb{E}\!\left[\left(\frac{\sqrt{s(X)}}{\sqrt{\pi(X)}}\right)^2\right]
\ge
\frac{\left(\mathbb{E}[\sqrt{s(X)}]\right)^2}{\mathbb{E}[\pi(X)]}
=
\frac{\left(\mathbb{E}[\sqrt{s(X)}]\right)^2}{n_b/n}.
\]
Equality holds if and only if $\sqrt{s(X)}/\sqrt{\pi(X)}$ is proportional to
$\sqrt{\pi(X)}$ almost surely, which implies $\pi(X)\propto \sqrt{s(X)}$.
Imposing the constraint $\pi(X)\le 1$ yields the optimal sampling rule
\[
\pi^*(X)
=
\min\left\{1,\ c\,\sqrt{s(X)}\right\},
\]
for some constant $c>0$. Taking $c=(n_b/n)\big/\mathbb{E}[\sqrt{s(X)}]$
gives the stated form
\[
\pi^*(X)
=
\min\left\{1,\ \frac{n_b}{n}\cdot\frac{\sqrt{s(X)}}{\mathbb{E}[\sqrt{s(X)}]}\right\},
\]
which ensures $\mathbb{E}[\pi^*(X)]\le n_b/n$.

Therefore, for any admissible sampling policy $\pi$, 
$$\mathbb{E}[s(X)/\pi(X)]\ge \mathbb{E}[s(X)/\pi^*(X)].$$
Substituting this inequality into the variance expansion above yields
\[
\Var\!\left(U^{\pi}_{\mathrm{AIPW}}\right)
\ \ge\
\Var\!\left(U^{\pi^*}_{\mathrm{AIPW}}\right)
\qquad \text{as } n\to\infty,
\]
which completes the proof.
\end{proof}

\subsection{Proof of Theorem \ref{thm:unbiased_aipw_u}}
\begin{proof}
Denote
\begin{align*}
    \hat{S} = \sum_{\mathcal{C}_{n,r}} \Delta_h(i_1,\ldots,i_r)\frac{\xi_{i_1}\cdots\xi_{i_r}}{\pi_{i_1}\cdots\pi_{i_r}},
\end{align*}
then
\begin{align}\label{eq:U_AIPW decomposition}
    U^{\pi, N}_{\rm AIPW}=
    \frac{1}{\binom{n}{r}} \sum_{\mathcal{C}_{n,r}} h\big(\hat{Y}_{i_1}, \ldots, \hat{Y}_{i_r}\big) +\frac{\hat{S}}{\widehat{N}}.
\end{align}

For any fixed $(i_1,\ldots,i_r)\in \mathcal{C}_{n,r}$,
\[
\mathbb E\!\left[\frac{\xi_{i_1}\cdots\xi_{i_r}}{\pi_{i_1}\cdots\pi_{i_r}} \Bigm|X_{i_1},\ldots,X_{i_r}\right]=1.
\]
Hence, $\mathbb E[\widehat{N}]=\binom{n}{r}$ and $\mathbb E[\hat S]=\binom{n}{r}\,\mathbb E[\Delta_h(1,\ldots,r)]$.
Moreover, as
$\mathbb E[\Delta_h(1,\ldots,r)]^2<\infty$, we have $\widehat{N}/\binom{n}{r}\xrightarrow{p}1$ and $\hat S/\binom{n}{r}\xrightarrow{p}\mathbb E[\Delta_h(1,\ldots,r)]$ by the law of large numbers. It follows that the normalized
quantity satisfies:
\[
\frac{\hat S}{\widehat{N}}
=
\frac{\hat S/\binom{n}{r}}{\widehat{N}/\binom{n}{r}}
\xrightarrow{p}\mathbb E[\Delta_h(1,\ldots,r)].
\]
It remains to show
$\mathbb E[\hat S/\widehat{N}]=\mathbb E[\Delta_h(1,\ldots,r)]+o(1)$.

On the event $\{\widehat{N}\ge \tfrac12 \binom{n}{r}\}$, we have
$|\hat S/\widehat{N}|\le 2|\hat S|/\binom{n}{r}$.
Under $\mathbb E[\Delta_h(1,\ldots,r)]^2<\infty$ and $\pi_{\min}>0$, the second moment of
$\hat S/\binom{n}{r}$ is uniformly bounded in $n$. Hence, the sequence
$\{\hat S/\binom{n}{r}\}_n$ is uniformly integrable.
Moreover, $\widehat{N}/\binom{n}{r}\xrightarrow{p}1$ implies $\mathbb P(\widehat{N}\ge \tfrac12 \binom{n}{r})\to1$.
Therefore, $\{\hat S/\widehat{N}\}_n$ is uniformly integrable.
Combining this with
$\hat S/\widehat{N} \xrightarrow{p} \mathbb E[\Delta_h(1,\ldots,r)]$, we obtain
\[
\mathbb E\!\left[\frac{\hat S}{\widehat{N}}\right]
=
\mathbb E[\Delta_h(1,\ldots,r)] + o(1).
\]
Taking expectations on both sides of \eqref{eq:U_AIPW decomposition}, we obtain
\[
\mathbb E[U^{\pi,N}_{\rm AIPW}]
=
\theta^\mu+\mathbb E[\Delta_h(1,\ldots,r)]+o(1)
=
\theta^*+o(1),
\]
which completes the proof.
\end{proof}

\subsection{Proof of Theorem \ref{thm:CLT}}
\begin{proof}
For simplicity, we set $\tau=1$.  Then $\pi^*_\tau=\pi^*$.
Let $\hat\pi(x)$ be the plug-in sampling probability, and adopt the CRN coupling in
Lemma~\ref{lem:HT-plugin-correct}:
$\hat\xi_i:=\xi_i(\hat\pi)$ and $\xi_i^*:=\xi_i(\pi^*)$.
Plugging $\hat\pi$ and $\pi^*$ into \eqref{eq:Hoffed_decomp_AIPW}, we obtain
\begin{align}\label{eq:IF-expansion}
U^{\hat \pi}_{\rm AIPW}-U^{\pi^*}_{\rm AIPW}
&=
\frac{r}{n}\sum_{i=1}^n
\Biggl(
h_1^\mu(\hat Y_i)
+
\{h_1(Y_i)-h_1^\mu(\hat Y_i)\}\frac{\hat\xi_i}{\hat\pi(X_i)}
-
\Bigl[
h_1^\mu(\hat Y_i)
+
\{h_1(Y_i)-h_1^\mu(\hat Y_i)\}\frac{\xi^{*}_i}{\pi^{*}(X_i)}
\Bigr]
\Biggr)
\;+\;
o_p(n^{-1/2}).
\end{align}
Define
\[
\psi_i(\pi):=
h_1^\mu(\hat Y_i)+\{h_1(Y_i)-h_1^\mu(\hat Y_i)\}\frac{\xi_i(\pi)}{\pi(X_i)}.
\]
The only dependence on $\pi$ is through the HT factor, and therefore
\begin{align}\label{eq:psi-diff}
\frac{1}{n}\sum_{i=1}^n\bigl(\psi_i(\hat\pi)-\psi_i(\pi^*)\bigr)
&=
\frac{1}{n}\sum_{i=1}^n
\omega_i\left(
\frac{\hat\xi_i}{\hat\pi(X_i)}-\frac{\xi_i^*}{\pi^*(X_i)}
\right)
=
T_n(\hat\pi)-T_n(\pi^*),
\end{align}
where
\[
\omega_i:=h_1(Y_i)-h_1^\mu(\hat Y_i),
\qquad
T_n(\pi):=\frac1n\sum_{i=1}^n \omega_i\,\frac{\xi_i(\pi)}{\pi(X_i)}.
\]
Then Lemma~\ref{lem:HT-plugin-correct} applies to $T_n(\cdot)$ and gives
\[
\sqrt n\bigl\{T_n(\hat\pi)-T_n(\pi^*)\bigr\}\xrightarrow{p}0.
\]
Combining this with \eqref{eq:IF-expansion} and \eqref{eq:psi-diff}, we conclude that
\begin{equation}\label{eq:IF-expansion-plugin}
\sqrt n\left(U^{\hat \pi}_{\rm AIPW}-U^{\pi^*}_{\rm AIPW}\right)\xrightarrow{p}0.
\end{equation}
Moreover, by construction $\{\psi_i(\pi^*)\}_{i=1}^n$ are i.i.d.\ with finite variance
$\Var(\psi^*)$, hence the classical CLT implies
\[
\sqrt n\left(\frac{1}{n}\sum_{i=1}^n (\psi_i(\pi^*)-\E[\psi^*])\right)
\xrightarrow{d}\mathcal N\!\left(0,\Var(\psi^*)\right).
\]
Since $\E[\psi^*]=\theta^*$, Slutsky's theorem together with
\eqref{eq:IF-expansion-plugin} yields
\[
\sqrt n\bigl(U^{\hat\pi}_{\rm AIPW}-\theta^*\bigr)
\xrightarrow{d}
\mathcal N\!\left(0,\ r^2\Var(\psi^*)\right),
\]
where $\psi^*=h_1^\mu(\hat Y)+\{h_1(Y)-h_1^\mu(\hat Y)\}\xi^*/\pi^*(X)$ and
$\xi^*\mid X \sim \mathrm{Ber}(\pi^*(X))$.
For $\tau< 1$, the proof can be directly generalized as long as $\sup_{1\leq i\leq n}|\hat{\pi}_\tau(X_i)-\pi^*_\tau(X_i)|=O_p(n^{-1/2})$. Notice that
$$|\hat{\pi}_\tau(X_i)-\pi^*_\tau(X_i)|= \tau|\hat{\pi}(X_i)-\pi^*(X_i)|.$$
By (H1) in Assumption \ref{assum:act_U}, the condition directly follows. This completes the proof.

\end{proof}

\subsection{Proof of Theorem \ref{thm:Consistency variance}}
\begin{proof}
    Let $\mu_1=\mathbb E[h_1(Y)]$ and $\mu_2=\mathbb E[h_1(Y)^2]$.
We have
\[
\widehat m_1(\pi^*)\xrightarrow{p}\mu_1,\qquad
\widehat m_2(\pi^*)\xrightarrow{p}\mu_2,
\qquad
\widehat S(\pi^*)\xrightarrow{p}
\mathbb E\!\left[(h_1(Y)-h_1^\mu(\hat Y))^2\Big(\frac{1}{\pi^*(X)}-1\Big)\right].
\]
Thus
\[
\widehat{\Var}_{\mathrm{HT}}^*(h_1):=\widehat m_2(\pi^*)-\widehat m_1(\pi^*)^2
\xrightarrow{p}\mu_2-\mu_1^2=\Var(h_1(Y)).
\]
By Lemma \ref{lem:variance-plugin-correct},applied with $\eta(t)=t^{-1}$ and
$Z_i=h_{1i}$ (resp.\ $Z_i=h_{1i}^2$), we obtain
$\widehat m_k(\hat\pi)-\widehat m_k(\pi^*)=o_p(1)$ for $k=1,2$.
Applying the same lemma with $\eta(t)=t^{-2}-t^{-1}$ and
$Z_i=(h_{1i}-\hat h_{1i})^2$ yields $\widehat S(\hat\pi)-\widehat S(\pi^*)=o_p(1)$.
Hence $\widehat m_k(\hat\pi)\xrightarrow{p}\mu_k$ and $\widehat S(\hat\pi)\xrightarrow{p} S^*$, and
\[
\widehat{\Var}_{\mathrm{HT}}(h_1)
=\widehat m_2(\hat\pi)-\widehat m_1(\hat\pi)^2
\xrightarrow{p}\Var(h_1(Y)).
\]
Therefore,
\[
\widehat{\Var}_1=r^2\widehat{\Var}_{\mathrm{HT}}(h_1)\xrightarrow{p}r^2\Var(h_1(Y)),
\qquad
\widehat{\Var}_2=r^2\widehat S(\hat\pi)\xrightarrow{p}r^2 S^*.
\]
By the standard AIPW variance decomposition (using $\mathbb E[\xi^*/\pi^*(X)\mid X]=1$),
\[
\Var(\psi^*)
=
\Var(h_1(Y))
+
\mathbb E\!\left[(h_1(Y)-h_1^\mu(\hat Y))^2\Big(\frac{1}{\pi^*(X)}-1\Big)\right].
\]
Thus $\sigma_*^2=r^2\Var(\psi^*)$ equals the sum of the two limits above, and
hence $\widehat{\Var}_1+\widehat{\Var}_2\xrightarrow{p}\sigma_*^2$.
\end{proof}

\subsection{Proof of Theorem \ref{thm:U-estimator Optimal sampling probabilities}}
\begin{proof}
Suppose Assumption \ref{assum:decom_U_est} holds,
it has
\[
\mathrm{tr}\!\left(\mathrm{Cov}(\hat\theta^\pi_{\mathrm{act}})\right)
=
C_n
+\frac{r^2}{n}\,
\mathbb{E}\!\left[\frac{1}{\pi(X)}\,\mathrm{tr}\!\big(A(\theta^*;X)[\nabla^2L_0(\theta^*)]^{-2}\big)\right]
+o(n^{-1}),
\]
where $C_n$ does not depend on $\pi$. Let
\[
S(X):=\mathbb{E}\!\left[\mathrm{tr}\!\big(A(\theta^*;X)[\nabla^2L_0(\theta^*)]^{-2}\big)\,\middle|\,X\right]\ge 0.
\]
Hence, minimizing $\mathrm{tr}(\mathrm{Cov}(\hat\theta^\pi_{\mathrm{act}}))$ over
$\pi:\mathcal X\to[0,1]$ with $\mathbb E\{\pi(X)\}\le n_b/n$ is asymptotically
equivalent to minimizing
\[
\mathbb{E}\!\left[\frac{S(X)}{\pi(X)}\right]
\quad\text{s.t.}\quad 0\le \pi(X)\le 1,\ \ \mathbb{E}\{\pi(X)\}\le \frac{n_b}{n}.
\]
Since $s(X)/\pi(X)$ is pointwise decreasing in $\pi(X)$, any minimizer uses the full
budget, so we impose $\mathbb{E}\{\pi(X)\}=n_b/n$.

Consider the Lagrangian
\[
\mathcal L(\pi,\lambda)=\mathbb{E}\!\left[\frac{S(X)}{\pi(X)}+\lambda\,\pi(X)\right]-\lambda\frac{n_b}{n},
\qquad \lambda\ge 0.
\]
For fixed $\lambda$, the integrand is separable in $X$, so the minimizer is obtained
pointwise by minimizing $u\mapsto S(X)/u+\lambda u$ over $u\in(0,1]$.
The unconstrained minimizer satisfies $-S(X)/u^2+\lambda=0$, hence
$u=\sqrt{S(X)/\lambda}$. Imposing $u\le 1$ yields
\[
\pi_\lambda(X)=\min\Big\{1,\sqrt{S(X)/\lambda}\Big\}.
\]
Choose $\lambda=\lambda^*$ so that $\mathbb{E}\{\pi_{\lambda^*}(X)\}=n_b/n$. Writing $c=1/\sqrt{\lambda^*}$ gives
\[
\pi^*(X)=\min\{1,\,c\sqrt{S(X)}\}
=\min\left\{1,\ \frac{n_b}{n}\cdot
\frac{\sqrt{S(X)}}{\mathbb{E}[\sqrt{S(X)}]}\right\}.
\]
By KKT optimality, $\pi^*$ minimizes $\mathbb{E}[S(X)/\pi(X)]$ over all admissible
$\pi$, and plugging this back into the expansion above gives
\[
\mathrm{tr}\left(\mathrm{Cov}\!\left(\hat{\theta}^{\pi}_{act}\right)\right)
\ \ge\
\mathrm{tr}\left(\mathrm{Cov}\!\left(\hat{\theta}^{\pi^*}_{act}\right)\right)
\qquad \text{as } n\to\infty.
\]
\end{proof}

\subsection{Proof of Theorem \ref{thm:CLT U-estimator}}
\begin{proof}
Since \eqref{eq:act_U_est} holds with any sampling rule $\pi(\cdot)$ and that
$\{\phi(\theta^*;Z_i,\hat Z_i,\xi_i,\pi_i)\}_{i=1}^n$ are i.i.d. with
\[
\mathbb E\!\left[\phi(\theta^*;Z,\hat Z,\xi,\pi)\right]=0,
\qquad
\mathbb E\!\left[\|\phi(\theta^*;Z,\hat Z,\xi,\pi)\|^2\right]<\infty,
\]
and define $\Sigma_g^\pi:=\Var\!\left(\phi(\theta^*;Z,\hat Z,\xi,\pi)\right)$.
Under the regularity conditions in Assumption \ref{assum:decom_U_est},
\begin{align}\label{eq:oracle CLT}
    \sqrt n\,(\hat\theta_{\rm act}^\pi-\theta^*)
\ \xrightarrow{d}\
\mathcal N\!\left(0,\ r^2\Big[\nabla^2 L_0(\theta^*)\Big]^{-1}
\Sigma_g^\pi
\Big[\nabla^2 L_0(\theta^*)\Big]^{-1}\right).
\end{align}
Since the oracle linear representation \eqref{eq:act_U_est} holds for $\pi=\pi^*$ and $\pi=\hat\pi$:
\[
\hat\theta^{\pi}_{\rm act}-\theta^*
=
\frac{r}{n}\sum_{i=1}^n
\phi(\theta^*;Z_i,\hat Z_i,\xi_i(\pi),\pi_i)\,H_0^{-1}
+o_p(n^{-1/2}),
\qquad
H_0:=\nabla^2 L_0(\theta^*).
\]
Subtracting the two expansions gives
\begin{align}\label{eq:theta-diff}
\hat\theta^{\hat\pi}_{\rm act}-\hat\theta^{\pi^*}_{\rm act}
&=
\frac{r}{n}\sum_{i=1}^n
\Big\{\phi(\theta^*;Z_i,\hat Z_i,\hat\xi_i,\hat\pi_i)
-\phi(\theta^*;Z_i,\hat Z_i,\xi_i^*,\pi_i^*)\Big\}H_0^{-1}
+o_p(n^{-1/2}).
\end{align}
By the definition of $\phi$, the $g(\theta^*;Z_i)$ term cancels and we can write
\[
\phi(\theta^*;Z_i,\hat Z_i,\hat\xi_i,\hat\pi_i)
-\phi(\theta^*;Z_i,\hat Z_i,\xi_i^*,\pi_i^*)
=
\omega_i\left(\frac{\hat\xi_i}{\hat\pi_i}-\frac{\xi_i^*}{\pi_i^*}\right),
\qquad
\omega_i:=g(\theta^*;Z_i)-g^\mu(\theta^*;\hat Z_i).
\]
By the moment assumption
$\E\|g(\theta^*;Z)\|^2+\E\|g^\mu(\theta^*;\hat Z)\|^2<\infty$,
we have $\E\|\omega_1\|^2<\infty$, the requirement $\E[\omega_1^2]<\infty$ in
Lemma~\ref{lem:HT-plugin-correct} is satisfied.
By Lemma~\ref{lem:HT-plugin-correct}, we have
\[
\hat\theta^{\hat\pi}_{\rm act}-\hat\theta^{\pi^*}_{\rm act}\xrightarrow{p}0.
\]
Plugging this into \eqref{eq:oracle CLT} yields
\[
\sqrt n\big(\hat\theta^{\hat\pi}_{\rm act}-\hat\theta^{\pi^*}_{\rm act}\big)\xrightarrow{p}0,
\qquad\text{i.e.,}\qquad
\hat\theta^{\hat\pi}_{\rm act}-\hat\theta^{\pi^*}_{\rm act}=o_p(n^{-1/2}).
\]
Consequently, if the oracle estimator satisfies
$\sqrt n(\hat\theta^{\pi^*}_{\rm act}-\theta^*)\Rightarrow \mathcal N(0,\Sigma_*)$,
then by Slutsky's theorem,
\[
\sqrt n(\hat\theta^{\hat\pi}_{\rm act}-\theta^*)
=
\sqrt n(\hat\theta^{\pi^*}_{\rm act}-\theta^*)+o_p(1)
\ \Rightarrow\
\mathcal N(0,\Sigma_*).
\]
\end{proof}


\section{Additional details of the algorithm and experiments}

In all of our experiments, we treat the estimates obtained from the full dataset as the true value. In each trial, the underlying data points are fixed, and the randomness arises from the labeling decisions $\xi_i$. The results are averaged over 3000 trials to obtain the final estimate. In the Income and Perioperative datasets, the predictive model $\mu$ and the uncertainty function $V(x)$ are both estimated using the XGBoost regression model. And for the Political Bias dataset, the predictive model $\mu$ is based on the predictions of GPT-3.5, while the uncertainty function $V(x)$ is estimated using the predictions from GPT-4.

\subsection{Details of estimating $V(x)$}\label{appen_sub_sec:est_Vx}

Given a small collection of queried responses $\left\{Y_1^{\prime}, \ldots, Y_{n'}^{\prime}\right\}$, the first-order Hoeffding component $h_1(y)$ can be estimated by
$$
\hat{h}_1(y)=\frac{1}{\binom{n'}{r-1}} \sum_{\mathcal{C}_{n',r}^{i}} h\left(y, Y_{i_2}^{\prime}, \ldots, Y_{i_r}^{\prime}\right).
$$
The estimator $\hat{h}_1^\mu(y)$ is defined analogously by replacing $Y_i$'s with their model-based predictions $\hat{Y}_i$'s. Using these estimates, we then fit a regression function 
$V(x)=\E[|\hat{h}_1(Y)-\hat{h}^\mu_1(\hat{Y})|\mid X=x]$ via suitable machine learning approaches, such as random forests. Since $Y$ is the only unknown component in $V$, we can first fit a model for $Y$ and then compute $V$, which can improve model accuracy.

To fully exploit the available labeled data while mitigating overfitting, we adopt a careful data reuse strategy. 
Using the same data to both train the predictive model $\mu$ and evaluate its predictions may introduce bias. In the Income and Perioperative datasets, the total numbers of labeled samples used are $m_{\text{Income}}=1000$ and $m_{\text{Perioperative}}=760$, respectively.
To address the potential reuse bias, we employ a two-fold scheme, in which the queried dataset $\mathcal{D}=\{(X^\prime_k,Y^\prime_k)\}_{k=1}^m$ is divided into two folds. 
Specifically, one fold $\mathcal{D}_1$ is used to train the predictive model $\mu$, while the other fold $\mathcal{D}_2$ is used to generate predictions $\{\hat{Y}_i : i \in \mathcal{D}_2\}$ and to construct the estimators $\hat{h}_1$ and $\hat{h}_1^{\mu}$. 
Since both $h_1$ and $h_1^{\mu}$ converge at fast rates, we then refit the uncertainty function $V$ using the full dataset $\mathcal{D}$, which avoids additional data splitting while maintaining satisfactory empirical performance. If the predictive model is pre-trained, such as the LLM, the procedure can be further simplified, where the data-splitting is not needed.

\subsection{Computational Details}
\label{app:computation}

We provide additional details on the computational cost of the proposed active
$U$-statistic. Recall that the normalized AIPW $U$-statistic in
\eqref{eq:VR-AIPW} can be written as
\[
    U^{\hat\pi,N}_{\rm AIPW}
    =
    \underbrace{
    \frac{1}{\binom{n}{r}}
    \sum_{\mathcal{C}_{n,r}}
    h\big(\hat{Y}_{i_1},\ldots,\hat{Y}_{i_r}\big)
    }_{\text{first term}}
    +
    \underbrace{
    \frac{1}{\widehat N}
    \sum_{\mathcal{C}_{n,r}}
    \Delta_h(i_1,\ldots,i_r)
    \frac{\xi_{i_1}\cdots\xi_{i_r}}
    {\hat\pi_{i_1}\cdots\hat\pi_{i_r}}
    }_{\text{second term}} .
\]
Its computation consists of three components: evaluating the first plug-in term
over the full unlabeled sample of size $n$, evaluating the second correction term
over the queried labeled sample of size approximately $n_b$, and constructing the
sampling probabilities $\hat\pi_i$. Under a naive implementation, the first two
components have complexities $O(n^r)$ and $O(n_b^r)$, respectively, with
$n_b \ll n$. Therefore, the dominant computational burden typically comes from
evaluating the first plug-in term over the full unlabeled sample.

To construct $\hat\pi_i$, we estimate a proxy score $V(x)$ based on the estimated
first-order projections $\hat h_1$ and $\hat h_1^\mu$. These quantities are
computed from a small queried or historical labeled subset of size $n'$, with
complexity $O((n')^{r-1})$. We then learn $V(x)$ by fitting a regression model
such as XGBoost to the resulting discrepancy values. Since these steps are
performed only on the small pilot set, their computational cost is modest
relative to the main estimator computation.

To make this concrete, Table~\ref{tab:runtime} reports the running time of each
component in one simulation run under the ACS Income setup, with
$n=80{,}000$, $n_b=2{,}500$, and $n'=500$. Specifically, we report the time for
computing $\hat h_1$ and $\hat h_1^\mu$, constructing the sampling rule, and
evaluating the first and second terms of the AIPW $U$-statistic.

\begin{table}[htbp]
\centering
\caption{Running time of different computational components under the ACS Income setup. The columns ``fast'' and ``naive'' correspond to the accelerated and direct implementations, respectively.}
\label{tab:runtime}
\resizebox{\textwidth}{!}{
\begin{tabular}{lcccccc}
\toprule
Method 
& $\hat h_1$ and $\hat h_1^\mu$ (s)
& Sampling rule (s)
& First term fast (s)
& First term naive (s)
& Second term fast (s)
& Second term naive (s) \\
\midrule
Active 
& $4.9\times 10^{-4}$ 
& $1.24$ 
& $5.5\times 10^{-3}$ 
& $10$ 
& $5.1\times 10^{-4}$ 
& $2.7\times 10^{-2}$ \\
Uniform 
& -- 
& -- 
& $5.5\times 10^{-3}$ 
& $10$ 
& $4.2\times 10^{-4}$ 
& $2.2\times 10^{-2}$ \\
Classical 
& -- 
& -- 
& -- 
& -- 
& $3.4\times 10^{-4}$ 
& $2.1\times 10^{-2}$ \\
\bottomrule
\end{tabular}
}
\end{table}

For some special $U$-statistics, such as the Gini coefficient, fast exact
algorithms are available. For example, after sorting, the computation of certain
pairwise sums can be reduced from $O(n^2)$ to $O(n\log n)$. Therefore,
Table~\ref{tab:runtime} reports the running times under both the naive
implementation and the accelerated implementation. The results show that the main
computational burden comes from evaluating the first plug-in term, while the cost
of constructing the sampling rule is mainly driven by regression-model fitting.
In contrast, the computation of $\hat h_1$ and $\hat h_1^\mu$ is relatively
inexpensive. More generally, distributed algorithms or incomplete $U$-statistics
can also be incorporated to further reduce the computational cost.

\subsection{Additional figure for comparing the ratio of saved budget}
We report the sample-budget saving ratio relative to the classical baseline for the experiments in Figures~\ref{fig:gini}--\ref{fig:Kendall}. Our proposed active sampling method is compared against the uniform-sampling baseline. The results show that incorporating predictions improves efficiency: uniform sampling achieves a positive saving ratio relative to the classical estimator. Moreover, our approach yields substantially larger efficiency gains than the other two benchmarks.
\begin{figure}[htbp]
    \centering
    \includegraphics[width=\linewidth]{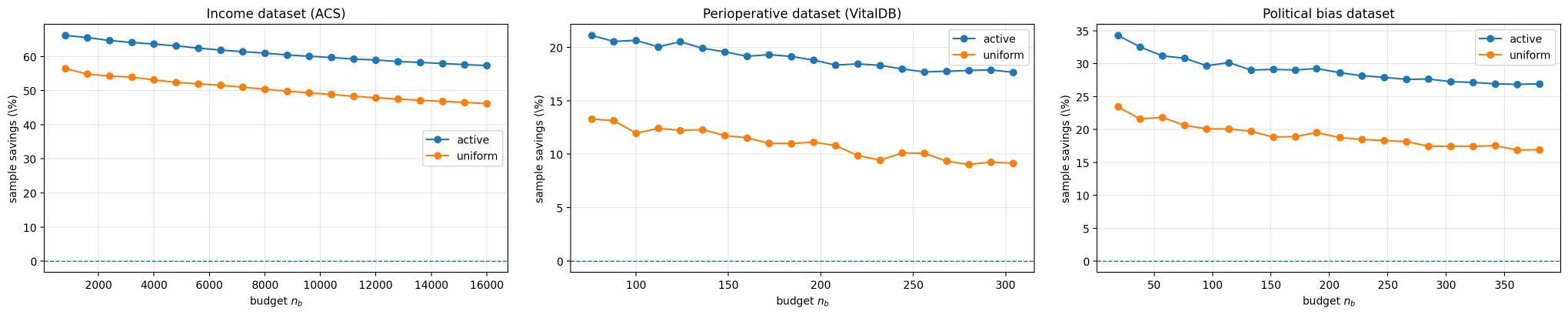}
    \caption{Save in sample budget due to active inference. Reduction in sample size required to achieve
the same confidence interval width across the applications shown in Figures.~\ref{fig:gini}--\ref{fig:Kendall}. 
    }
    \label{fig:saving}
\end{figure}

\section{Additional experiment results}
\subsection{Additional Experiment on a Higher-Order U-Statistic}
We further consider the \textbf{UCI Bike Sharing} dataset, which contains hourly and daily records of bike rental counts together with weather and calendar covariates. 
Each observation corresponds to a rental record, and the label $Y$ is defined as the log-transformed rental count,
$$
Y = \log(1+\mathrm{cnt}).
$$
The total sample size is $n=17{,}379$, and we use $1{,}000$ initially labeled samples to construct the prediction and sampling components.
\vspace{-0.5em}

\paragraph{Target parameter: Third central moment}
Beyond pairwise targets such as the Gini index or Kendall's tau, we also evaluate our method on a higher-order U-Statistic. Specifically, we consider the third central moment of the log-transformed rental count,
$$
\theta^* \;=\; \mathbb{E}\!\left[(Y-\mathbb{E}Y)^3\right],
$$
which measures the skewness structure of the rental-count distribution and can be represented as a third-order $U$-statistic. 
This experiment is designed to examine whether the proposed active inference framework remains effective beyond pairwise $U$-statistic targets.

Table~\ref{tab:bike-sharing-neff} compares the proposed {active} sampling method with the {classical} and {uniform} baselines under different labeling budgets $n_b$. The table reports the effective sample size required by the {classical} estimator to match each method's confidence-interval length, with the corresponding sample savings shown in parentheses. Across all labeling budgets, the {active} method achieves the largest effective sample size and consistently improves over both baselines. This demonstrates that our approach remains efficient for higher-order $U$-statistic tasks.

\begin{table}[htbp]
\centering
\caption{Effective sample size on the UCI Bike Sharing dataset for estimating the third central moment of the log-transformed rental count. The numbers in parentheses denote the corresponding sample savings relative to the {classical} baseline.}
\label{tab:bike-sharing-neff}
\begin{tabular}{lccc}
\toprule
Method 
& $n_b = 1892$ 
& $n_b = 2584$ 
& $n_b = 3275$ \\
\midrule
{active}    & 2843 (50\%) & 3820 (47\%) & 4795 (46\%) \\
{classical} & 1892        & 2584        & 3275        \\
{uniform}   & 2683 (41\%) & 3593 (39\%) & 4504 (37\%) \\
\bottomrule
\end{tabular}
\end{table}

\subsection{Comparison of sampling policies}\label{appen_sub:comp_sampling}
We compare our oracle sampling policy $$\pi(X)\propto \sqrt{\E[|h_1(Y)-h_1^\mu(\hat{Y})|^2\mid X]}$$ 
with the suboptimal one introduced by \citet{Tijana2024Active} for M-estimator, where $\pi^Y(X)\propto \sqrt{\E[|Y-\hat{Y}|^2\mid X]}$. AIPW U-statistics based on sampling probability $\pi^Y(X)$ is denoted as \textbf{plugin-act-Y}.

We generate i.i.d. covariates from a multivariate normal distribution
\[
X_i \sim \mathcal{N}(0,\Sigma),\qquad i=1,\ldots,n,
\]
where
\[
\Sigma \;=\; 0.3\,I_p \;+\; 0.7\,\mathbf{1}_p\mathbf{1}_p^\top,
\]
The response is generated from a nonlinear regression model with Gaussian noise:
\[
Y_i \;=\; \mu \;+\; f(X_i)\;+\;\varepsilon_i,
\qquad
\varepsilon_i\sim \mathcal{N}(0,\sigma^2),\ \sigma=0.3,
\]
where the regression function depends on the first four coordinates,
\begin{align*}
    f(X_i)=1.2\sin(X_{i1})+0.8\cos(X_{i2})+0.6\,X_{i1}X_{i2}+0.5\,X_{i3}^2+0.7\,\tanh(X_{i4}).
\end{align*}

Figure~\ref{fig:combined_simulation} compares empirical coverage (left) and effective sample size $n_{\mathrm{eff}}$ (right) across budgets $n_b$. In contrast, the \textbf{plugin-act-Y} method improves over \textbf{Uniform} but remains uniformly less efficient than \textbf{Active}. This gap is consistent with the theorem \ref{thm:Optimal sampling probabilities}: the variance-optimal design is driven by he first-order Hoeffding projection, $|h_1(Y)-h_1^\mu(\hat Y)|$, rather than the mean-estimation residual $|\hat Y-Y|$ (the two coincide only in the special case $r=1$ and $h(y)=y$).

\begin{figure*}
    \centering
    \includegraphics[width=0.9\linewidth]{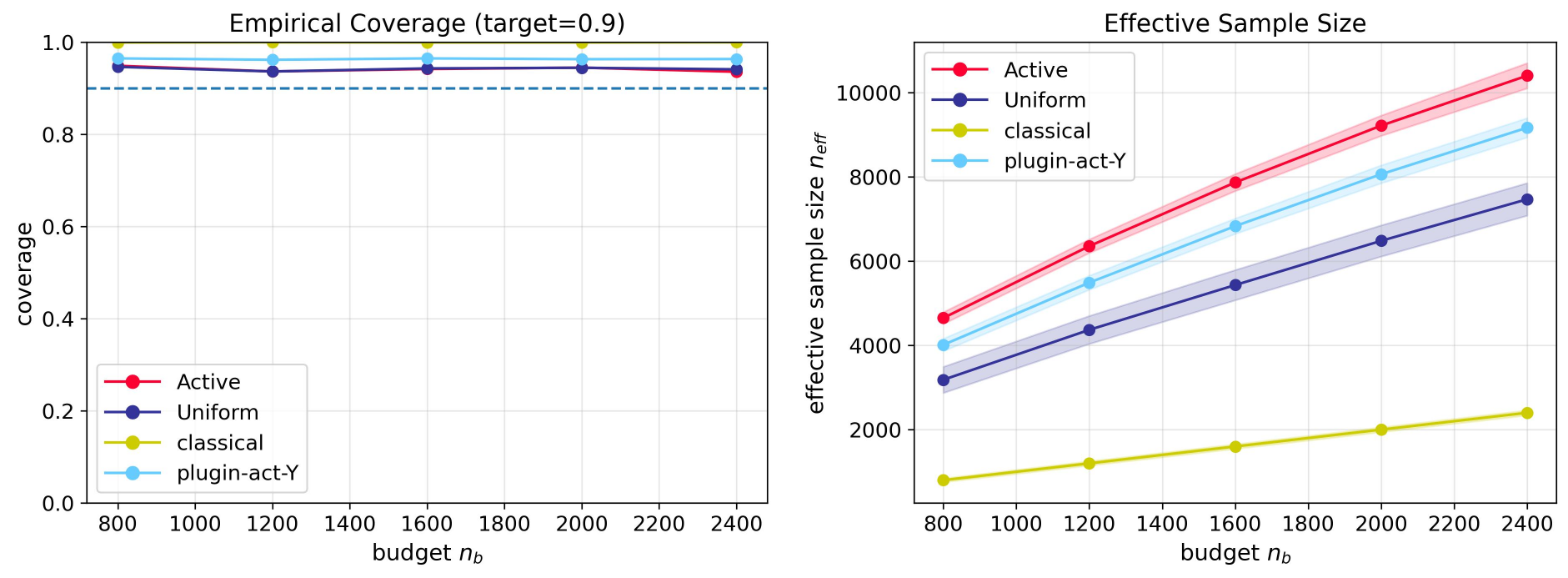}
    \caption{Estimation of Gini index. \textbf{Left}: 
empirical coverage of the  intervals. \textbf{right}: effective sample size $n_{\rm eff}$, with shaded $\pm 1$ standard-deviation bands.
    }
    \label{fig:combined_simulation}
\end{figure*}

\subsection{Sensitivity to Predictive-Model Misspecification}
\label{app:misspecification}

We further examine the sensitivity of the proposed method to predictive-model
misspecification. 
The validity of the AIPW $U$-statistic does not require the predictive model
$\mu$ to be correctly specified, since the plug-in term is corrected by the IPW
label term. 
Thus, misspecification mainly affects efficiency through the quality of the proxy
score used to construct the sampling rule, rather than the validity of the
inference procedure.

To assess this issue, we conduct an additional experiment on the perioperative
dataset. 
In this experiment, the model used to predict the outcome is intentionally
misspecified by using a linear model. 
Table~\ref{tab:misspecification} reports the effective sample size under different
labeling budgets $n_b$. 
The numbers in parentheses denote the corresponding sample savings relative to
the classical baseline.

\begin{table}[htbp]
\centering
\caption{Effective sample size on the perioperative dataset under predictive-model misspecification. The prediction model is intentionally misspecified by using a linear model. The numbers in parentheses denote the corresponding sample savings relative to the classical baseline.}
\label{tab:misspecification}
\begin{tabular}{lccc}
\toprule
Method 
& $n_b = 76$ 
& $n_b = 208$ 
& $n_b = 304$ \\
\midrule
active    & 85 (11\%)   & 241 (15\%)  & 362 (19\%) \\
classical & 76          & 208         & 304        \\
uniform   & 68 (-11\%)  & 190 (-8\%)  & 283 (-6\%) \\
\bottomrule
\end{tabular}
\end{table}

The results show that the proposed active method is reasonably robust to
predictive-model misspecification. 
In this setting, the uniform AIPW estimator can be less efficient than the
classical estimator, as indicated by the negative sample savings. 
In contrast, the active method consistently achieves larger effective sample
sizes across all labeling budgets. 
This suggests that the efficiency gain of the proposed method is not solely driven
by accurate prediction itself, but also by whether the estimated sampling
probabilities successfully prioritize informative samples.

\subsection{Illustration of the variance reduction through normalization}\label{appen_sub_sec:vr_reduce}

We compare the original AIPW $U$-statistic $U^{\pi}_{\rm AIPW}$ with its Hájek-normalized variant $U^{\pi,N}_{\rm AIPW}$ in~\eqref{eq:VR-AIPW}. We consider the two additional benchmarks:
\begin{itemize}
    \item \textbf{uniform-prime}: AIPW $U$-statistic in \eqref{eq:AIPW_pi} with uniform sampling policy without normalization, i.e. $\pi(x)\equiv{n_b}/{n}$.
  \item \textbf{active-prime}: Our AIPW $U$-statistic without normalization.
\end{itemize}

\begin{figure}
    \centering
    \includegraphics[width=0.6\linewidth]{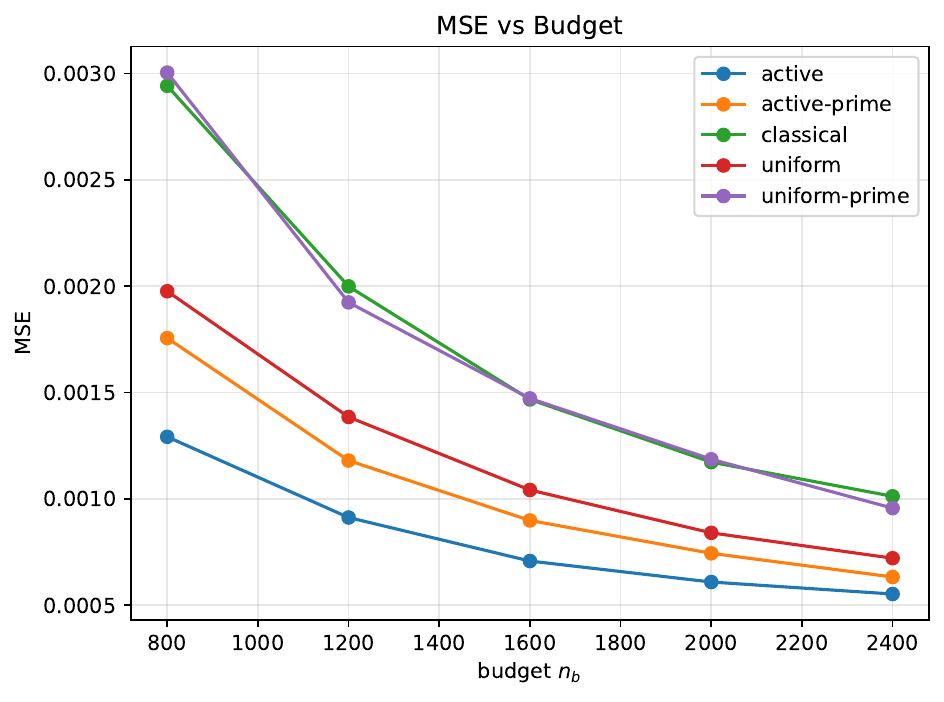}
    \caption{Average MSE under different choices of sample budget $n_b$.
    }
    \label{fig:simulation_mse_vs_budget}
\end{figure}

Figure~\ref{fig:simulation_mse_vs_budget} compares the Hájek normalized and unnormalized AIPW U-statistics. Across the considered labeling budgets, the normalized estimator exhibits smaller MSE in this experiment.

\subsection{$U$-estimator}\label{appen_sub_sec:U-estimator}
We conduct a simulation study to evaluate the performance of the proposed active $U$-estimator compared with two baseline methods under varying data conditions:
\begin{itemize}
    \item \textbf{noML:}
    uses only the labeled subset $(X, Y)$ selected by uniform sampling. 
    The parameter $\theta$ is obtained by minimizing $L_{n_b}(\theta)$ with the pairwise logistic loss.
    \item \textbf{Semi:}
    pseudo-labels are generated for unlabeled data by a pre-trained regression model 
    $f_{\mathrm{pseudo}}(X)$, trained on $(X_1, Y_1)$, and the loss is computed using both 
    labeled and pseudo-labeled pairs:
    \[
        \tilde{L}(\theta)
        = \frac{1}{n_b(n_b-1)} \!\sum_{i\ne j}\!
          \ell(\theta; Z_i, \hat{Z}_j),
        \quad
        \hat{Z}_j = (X_j, \hat{Y}_j).
    \]
    \item \textbf{Act:}
    our active $U$-estimator based on the two-stage procedure in Section \ref{sub_sec:opt_U-est}.
\end{itemize}

We assume that the data are generated from a linear model
\begin{equation*}
    Y = X^{\top}\theta^{*} + \varepsilon, \qquad 
    \varepsilon \sim \mathcal{N}(0, 1),
\end{equation*}
where $X \in \mathbb{R}^{p}$ follows a multivariate Gaussian distribution $\mathcal{N}(0, I_{p})$, 
and $\theta^{*}$ is the ground-truth coefficient vector normalized to $\|\theta^{*}\|_2 = 1$. 
The goal is to estimate $\theta^{*}$ based on pairwise comparisons of $(X_i, Y_i)$ using $U$-statistic–type losses.

The pairwise logistic loss is defined as
\begin{equation*}
    \ell(\theta; Z_i, Z_j)
    = \log\!\big(1 + \exp\!\big[-s_{ij}\,\theta^{\top}(X_i - X_j)\big]\big),
    \qquad
    s_{ij} = \mathrm{sign}(Y_i - Y_j),
\end{equation*}
and the empirical $U$-risk is given by
\[
    L_n(\theta) = \frac{1}{n(n-1)} \sum_{i \ne j} \ell(\theta; Z_i, Z_j).
\]

\begin{figure*}
    \centering
    \includegraphics[width=0.9\linewidth]{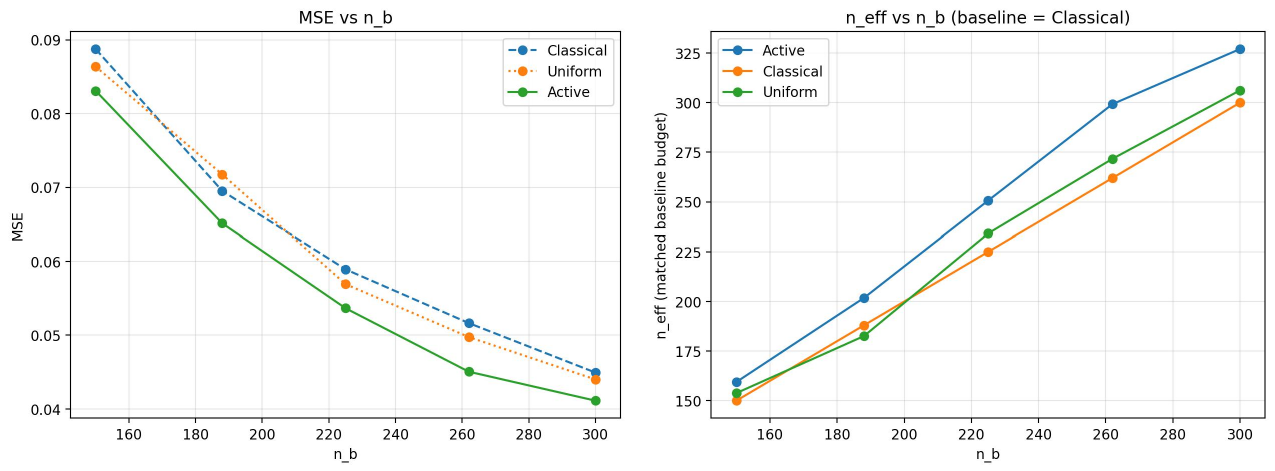}
    \caption{\textbf{Left}: Average MSE for the U-estimation under  different choices of sample budget $n_b$. \textbf{Right}: $n_{\rm eff}$ for the U-estimator under  different choices of sample budget $n_b$.}
    \label{fig:combined_U_estimator}
\end{figure*}

We observe an unlabeled set $(X_{\mathrm{unl}}, Y_{\mathrm{unl}})$ of size $1,500$, from which a small subset is actively queried.  A historical labeled set $(X_{lab}, Y_{lab})$ with same size is used for supervised training, compute the pilot estimator and pseudo-labeling or gradient reference. We compare the proposed active sampling strategy with the classical and uniform baselines in Figure~\ref{fig:combined_U_estimator}.
Over the entire range of budgets, active U-estimation yields a larger effective sample size and a lower MSE than either baseline, reflecting a clear improvement in statistical efficiency.


\section{Discussion on Degenerate $U$-Statistics}
\label{app:degenerate}

Our main theory focuses on the non-degenerate regime, in which the
first-order Hoeffding projection determines the leading asymptotic behavior.
In this setting, the optimal sampling rule is driven by the residual uncertainty
in the first-order projection, as established in
Theorem~\ref{thm:Optimal sampling probabilities}.

When the target $U$-statistic is degenerate, however, the first-order projection
vanishes and the leading contribution comes from higher-order Hoeffding
components. The resulting limiting distribution of the AIPW $U$-statistic is
generally non-Gaussian, which makes the construction of valid confidence
intervals substantially more delicate. Nevertheless, the AIPW $U$-statistic
itself remains well defined in the degenerate case. Thus, the proposed
framework can still be applied with a generic sampling policy, such as uniform
sampling. The main difficulty lies instead in deriving a variance-optimal
sampling policy and establishing the corresponding inferential theory.

Below, we outline a possible route toward deriving an optimal sampling policy
for second-order degenerate $U$-statistics. 
Define
\[
    \Phi_2(y_1,y_2)
    :=
    \mathbb{E}\!\left[
        h(y_1,y_2,Y_3,\ldots,Y_r)
    \right],
    \qquad
    \Phi_2^\mu(\hat y_1,\hat y_2)
    :=
    \mathbb{E}\!\left[
        h(\hat y_1,\hat y_2,\hat Y_3,\ldots,\hat Y_r)
    \right],
\]
and let
\[
    \Delta_2
    :=
    \Phi_2(Y_1,Y_2)-\Phi_2^\mu(\hat Y_1,\hat Y_2).
\]
In this regime, the leading variance term depending on the sampling policy
$\pi(\cdot)$ is of the form
\[
    \mathbb{E}\left[
        \Delta_2^2
        \left\{
        \frac{1}{\pi(X_1)\pi(X_2)} - 1
        \right\}
    \right].
\]
Thus, up to terms independent of $\pi$, the oracle policy can be characterized by
\[
    \min_{\pi}
    \mathbb{E}\left[
        \frac{\Delta_2^2}{\pi(X_1)\pi(X_2)}
    \right]
    \quad
    \text{s.t.}
    \quad
    \mathbb{E}\{\pi(X)\}\leq \frac{n_b}{n},
    \quad
    0<\pi(X)\leq 1 .
\]
Unlike the non-degenerate case, this criterion depends on a pairwise quantity
rather than a one-point score, and therefore does not generally reduce to a
closed-form marginal rule. 
Writing
\[
    T_2(x,x')
    :=
    \mathbb{E}\!\left[
        \Delta_2^2
        \mid X_1=x, X_2=x'
    \right],
\]
the corresponding optimality condition suggests a fixed-point relation of the
form
\[
    \pi^*(x)
    \propto
    \sqrt{
    \mathbb{E}\left[
        \frac{T_2(x,X')}{\pi^*(X')}
    \right]
    },
\]
where $X'$ is an independent copy of $X$. 
This expression shows that a closed-form solution is generally unavailable. 
A possible practical approach is to estimate the pairwise score $T_2(x,x')$ from
a pilot labeled sample and then solve the resulting policy by a fixed-point or
alternating optimization procedure. This suggests that extending active inference from non-degenerate to degenerate
$U$-statistics requires fundamentally new tools for both policy design and
asymptotic inference.

\end{document}